\documentclass[11pt]{article}
\usepackage[margin=2.5cm,includefoot,footskip=30pt]{geometry}
\usepackage{comment}
\usepackage{caption,subcaption}
\usepackage{multirow}
\usepackage{authblk}
\usepackage{dsfont}
\usepackage{mathtools}
\usepackage{dsfont}
\usepackage{amsfonts,amsthm,amsmath,amssymb}
\usepackage{latexsym,bm,bbm,graphicx,float,mathtools}
\usepackage[utf8]{inputenc}
\usepackage[T1]{fontenc}
\usepackage{url}
\usepackage{booktabs}
\usepackage{nicefrac}
\usepackage{microtype}
\usepackage{xcolor}
\usepackage[colorlinks,linkcolor=blue]{hyperref}
\usepackage[capitalize,nameinlink]{cleveref}
\usepackage[normalem]{ulem}
\usepackage[numbers,sort,compress]{natbib}

\newtheorem{theorem}{Theorem}
\newtheorem*{theorem*}{Theorem}
\newtheorem{lemma}{Lemma}[section]
\newtheorem*{lemma*}{Lemma}
\newtheorem{fact}[lemma]{Fact}

\newtheorem{proposition}[lemma]{Proposition}
\newtheorem*{proposition*}{Proposition}
\newtheorem{corollary}{Corollary}[theorem]
\newtheorem*{corollary*}{Corollary}

\theoremstyle{definition}
\newtheorem{definition}{Definition}[section]
\newtheorem{arch}{Architecture}
\theoremstyle{remark}

\newcommand{\eps}{\epsilon}

\newcommand{\poly}{\mathrm{poly}}
\newcommand{\sigmoid}{\mathrm{sigmoid}}

\newcommand{\sbm}{\mathrm{SBM}}

\newcommand{\csbm}{\mathrm{CSBM}}

\newcommand{\unif}{\mathrm{Unif}}

\newcommand{\Poi}{\mathrm{Poi}}
\newcommand{\Bin}{\mathrm{Bin}}

\newcommand{\Pd}{\mathds{P}}

\newcommand{\R}{\mathds{R}}
\newcommand{\Z}{\mathds{Z}}

\newcommand{\Prob}{\mathbf{Pr}}

\newcommand{\one}{\mathbf{1}}
\newcommand{\indic}{\mathds{1}}

\newcommand{\bv}{\mathbf{b}}

\newcommand{\xv}{\mathbf{x}}

\newcommand{\gv}{\mathbf{g}}

\newcommand{\hy}{\hat{y}}

\newcommand{\hyv}{\mathbf{\hat{y}}}

\newcommand{\bA}{\mathbf{A}}
\newcommand{\bAt}{\tilde{\mathbf{A}}}
\newcommand{\bB}{\mathbf{B}}

\newcommand{\bH}{\mathbf{H}}
\newcommand{\bI}{\mathbf{I}}
\newcommand{\bM}{\mathbf{M}}
\newcommand{\bP}{\mathbf{P}}
\newcommand{\bQ}{\mathbf{Q}}

\newcommand{\bW}{\mathbf{W}}
\newcommand{\bX}{\mathbf{X}}
\newcommand{\bY}{\mathbf{Y}}
\newcommand{\bZ}{\mathbf{Z}}
\newcommand{\muv}{\bm{\mu}}
\newcommand{\nuv}{\bm{\nu}}

\newcommand{\brho}{\boldsymbol{\rho}}

\newcommand{\cC}{\mathcal{C}}
\newcommand{\cE}{\mathcal{E}}
\newcommand{\cF}{\mathcal{F}}
\newcommand{\cG}{\mathcal{G}}
\newcommand{\cM}{\mathcal{M}}
\newcommand{\cN}{\mathcal{N}}

\newcommand{\iid}{{i.i.d.}}

\newcommand{\dtv}{d_{\rm TV}}
\newcommand{\E}{\mathds{E}}

\newcommand{\setc}{{\mathsf{c}}}

\newcommand{\inner}[1]{\langle #1 \rangle}

\newcommand{\floor}[1]{\lfloor #1 \rfloor}
\newcommand{\ceil}[1]{\lceil #1 \rceil}

\newcommand{\abs}[1]{\left\lvert #1 \right\rvert}
\newcommand{\bpar}[1]{\left( #1 \right)}
\newcommand{\bsq}[1]{\left[ #1 \right]}
\newcommand{\bbrq}[1]{\left\{ #1 \right\}}

\newcommand{\norm}[1]{\left\lVert #1 \right\rVert}

\DeclareMathOperator*{\argmax}{argmax}
\DeclareMathOperator*{\argmin}{argmin}

\DeclareMathOperator*{\sgn}{sgn}

\setlength{\parindent}{0em}
\setlength{\parskip}{0.5em}

\title{Optimality of Message-Passing Architectures for Sparse Graphs}

\author[1]{Aseem Baranwal}
\author[1]{Kimon Fountoulakis}
\author[1,2]{Aukosh Jagannath}
\affil[1]{David R. Cheriton School of Computer Science, University of Waterloo, Waterloo, Canada}
\affil[2]{Department of Statistics and Actuarial Science, 
Department of Applied Mathematics, University of Waterloo, Waterloo, Canada}

\date{}

\begin{document}
\graphicspath{ {./img/} }

\maketitle

\begin{abstract}
We study the node classification problem on feature-decorated graphs in the sparse setting, i.e., when the expected degree of a node is $O(1)$ in the number of nodes, in the fixed-dimensional asymptotic regime, i.e., the dimension of the feature data is fixed while the number of nodes is large. Such graphs are typically known to be locally tree-like. We introduce a notion of Bayes optimality for node classification tasks, called asymptotic local Bayes optimality, and compute the optimal classifier according to this criterion for a fairly general statistical data model with arbitrary distributions of the node features and edge connectivity. The optimal classifier is implementable using a message-passing graph neural network architecture. We then compute the generalization error of this classifier and compare its performance against existing learning methods theoretically on a well-studied statistical model with naturally identifiable signal-to-noise ratios (SNRs) in the data. We find that the optimal message-passing architecture interpolates between a standard MLP in the regime of low graph signal and a typical convolution in the regime of high graph signal. Furthermore, we prove a corresponding non-asymptotic result.
\end{abstract}

\section{Introduction}
Graph Neural Networks (GNNs) have rapidly emerged as a powerful tool for learning on graph-structured data, where along with features of the entities, there also exists a relational structure among them. They have found numerous applications to a wide range of domains such as social networks \citep{backstrom2011supervised}, recommendation systems \citep{YHCEHL18,hao2020p}, chip design \citep{mirhoseini2021graph}, bioinformatics \citep{scarselli:gnn,zhang2021graph}, computer vision \citep{Monti_2017_CVPR}, quantum chemistry \citep{gilmer:quantum}, statistical physics \citep{battaglia:graphnets,bapst2020unveiling}, and financial forensics \citep{zhang2017hidden,weber2019anti}. Most of the success with these applications has been possible due to the advent of the message-passing paradigm in GNNs, however, designing optimal GNN architectures for such a wide variety of applications still remains a challenging task.

In this work, we are interested in the node classification problem on very sparse feature-decorated graphs that are locally tree-like. We focus on the regime where the dimension of the node features is fixed and the number of nodes is large. Our motivation for considering this regime is that many major benchmark datasets for node classification appear to scale in this fashion. For example, in the popular Open Graph Benchmark collection \citep{ogb},  the medium and large-scale node-property prediction datasets have roughly $10^6$ nodes ({\tt ogbn-products}, {\tt ogbn-mag}) to about $10^8$ nodes ({\tt ogbn-papers100M}), each with roughly $10^2$ features. Graphs with such properties exist naturally in social, informational and biological networks; for motivational examples, see \citet{stelzl2005human,adcock2013tree}.
We present a precise definition of optimality for node classification tasks on locally tree-like graphs in this scaling regime and compute the optimal classifier according to this definition, for a multi-class statistical data model where node features can have arbitrary continuous or discrete distributions. Subsequently, we show that a message-passing GNN architecture is able to realize the optimal classifier.
Furthermore, we provide a theoretical analysis, comparing the generalization error of the optimal classifier with other architectures like GCN and simple MLPs. Our results support a recent work \citep{velivckovic2022message} in the context of classification on sparse graphs.
In particular, we show that when node features are accompanied by sparse graphical side information, message-passing graph neural networks are able to realize the optimal classification scheme, and as such, there does not exist a better architecture beyond the message-passing paradigm.

\paragraph{Related Work.} There has been a tremendous amount of work on GNN architecture design, where the most popular designs are based on a convolutional architecture, with each layer of the neural network performing a weighted convolution (averaging) operation with immediate neighbours, e.g., graph convolutional networks (GCN) \citep{kipf:gcn,chen2020simple} or graph attention networks (GAT) \citep{velivckovic2018graph}. These architectures are known to have several limitations regarding their expressive power (see, for e.g., \citet{li2018deeper,oono2020graph,balcilar2021analyzing,xu2021how,keriven2022not}).

An interesting line of research consists of both theoretical and empirical works that attempt to address these limitations by developing an understanding of GNN architectures within the scope of message-passing \citep{Rong2020DropEdge,liu2022enhancing,maskey2022generalization}, as well as beyond it \citep{maron2018invariant,pmlr-v97-murphy19a,Chen2019Equivalence}.
For example, \cite{Xu2018Jumping} propose an architecture with a technique called skip-connections, that flexibly leverages different ranges of neighbourhoods for each node to enable structure-awareness in node representations, \citet{chen2020simple} propose a modification of the vanilla GCN with an initial residual that effectively relieves the problem of oversmoothing \citep{oono2020graph}, and \cite{keriven2021universality} study the universality of structural GNNs in the large random graph limit.
However, this area of research still lacks a clear understanding of optimality in the context of graph learning problems, making it hard to design architectures for which a well-defined notion of optimality can be theoretically justified.

Several works have studied traditional message-passing GNN architectures like GCN and GAT using the binary contextual stochastic block model, see for example, \citet{pmlr-v139-baranwal21a,chien2022node,fountoulakis2022classification,Fountoulakis2022GraphAR,javaloy2022learnable,baranwal2023effects}.
These analyses rely heavily on two assumptions: first, the graph is not too sparse, i.e., for a graph with $n$ nodes, the expected degree of a node is $\Omega_n(\log^2n/n)$, and second, the node features are modelled as a Gaussian mixture.
The work by \citet{wei2022understanding} is of particular interest to us, where the authors take a Bayesian inference perspective to investigate the functions of non-linearity in GNNs for binary node classification.
They characterize the max-a-posterior estimation of a node label given the features of itself and its immediate neighbours. A similar perspective to that of \citet{wei2022understanding} is discussed in \citet{gosch2023revisiting}, where the latter authors derive insights into the robustness-accuracy trade-off in GNNs for node classification. In contrast to these inspiring works, we study the highly sparse regime where the expected degree of a node is $O_n(1)$ and consider nodes beyond the immediate neighbours, at any fixed distance. (In fact, our non-asymptotic results allow distances of order $c\log n$ for small enough $c>0$, see \cref{non-asymp} below.)  Furthermore, our main result holds for a general multi-class statistical model with arbitrary continuous or discrete feature distributions and arbitrary edge-connectivity probabilities between all pairs of classes.

\paragraph{Our Contributions.}
In this paper, we use a multi-class statistical model with arbitrary node features and edge-connectivity profiles among all pairs of classes to study the node classification problem in the regime where the graph component of the data is very sparse, i.e., the expected degree is $O(1)$. The data model is described in \cref{data-model}. We state the following main results and findings:
\begin{enumerate}
    \item We introduce a family of graph neural network architectures that are asymptotically (in the number of nodes $n\to\infty$) Bayes optimal in a local sense for a general multi-class data model with arbitrary feature distributions. The optimality is stated precisely in \cref{thm:opt-msg-passing}.
    \item We analyze the architecture in the simpler two-class setting with Gaussian features, explicitly characterizing the generalization error in terms of the natural signal-to-noise ratio (SNR) in the data, and perform a comparative study against other learning methods analyzed using the same statistical model (\cref{thm:generalization-gaussian,thm:gen-error-extremes}). We find two key insights:
    \begin{itemize}
        \item When the graph SNR is very low, the architecture reduces to a simple MLP that does not consider the graph, while if it is very high, our architecture reduces to a typical convolutional network that averages information from all nodes in the local neighbourhood. In the regime between the low and high SNRs, the architecture interpolates and performs better than both a simple MLP and a typical GCN.
        \item If the information in the graph is larger than a threshold, then a simple convolution is able to perform better than all methods that do not utilize the graph.
        Not surprisingly, this threshold aligns with the Kesten-Stigum weak-recovery threshold for community detection in sparse networks \citep{massoulie2014community,mossel2018proof}.
    \end{itemize}
    \item In the non-asymptotic setting with a fixed number of nodes, we show that even for a logarithmic depth, the neighbourhoods of an overwhelming fraction of nodes are tree-like with high probability. Subsequently, we show that the optimal classifier in the non-asymptotic setting obtains an error that is close to that incurred by the optimal classifier in the asymptotic setting. This is formalized in \cref{thm:non-asymp}.
\end{enumerate}

\section{Architecture}
This section explains the design of our GNN architecture for node classification. We perform two modifications to existing message-passing architectures. First, we decouple the layers in the neural network from the neighbourhood radius in the message-passing framework. This style of decoupled architecture has previously been studied, see for example, \citet{nikolentzos2020khop,feng2022how,baranwal2023effects}. Second, we introduce a learnable parameter that models edge connectivity between each pair of classes and helps construct the messages to propagate.
In the following, for any matrix $M$, we denote row $i$ of $M$ by $M_{i,:}$ and column $i$ of $M$ by $M_{:,i}$.

Before stating our architecture, we need the following additional notation and pre-processing. Let $\ell\ge 0$ and $L>0$ be fixed integers. Let $C\ge 2$ be the number of classes. For given data $(\bA,\bX)$ where $\bA\in\R^{n\times n}$ is the adjacency matrix of an unweighted undirected graph, and $\bX\in\R^{n\times d}$ is the node feature matrix, we perform a pre-computation on the graph to construct a tensor $\bAt$ as follows:
\begin{align*}
\bAt^{(0)} = \bI, && \bAt^{(k)} = f(\bA^{k}) \land \bpar{\neg f\bpar{\sum_{m=0}^{k-1}\bA^{m}}} \text{ for } k\in \{1,\ldots,\ell\},
\end{align*}
where $f(M)$ for a matrix $M$ returns the entry-wise flattened matrix with $f(M)_{ij} = \indic(M_{ij}>0)$, and $(\land, \neg)$ denote the entry-wise bit-wise operators (`and', `negation') respectively. Note here that $\bAt^{(k)}$ is an $n\times n$ binary matrix with $\bAt^{(k)}_{uv}=1$ if and only if $v$ is present in the distance $k$ neighbourhood of $u$ but not within the distance $(k-1)$ neighbourhood. The idea behind this pre-processing step is the following: for each node $u$ and each $k\in[\ell]$, we want to divide the radius $\ell$ neighbourhood of $u$ into $\ell$ groups of nodes, where each group $k\in[\ell]$ consists of nodes that are within discovered the neighbourhood at each distance from a given node. $\bAt^{(k)}_{u,:}$ models a non-backtracking walk of length $k$ that considers new nodes in the distance-$k$ neighbourhood that were not discovered.

We can now define the graph neural network architecture as follows.
\begin{arch}\label{arch}
Given input data $(\bA,\bX)$ where $\bA\in\{0,1\}^{n\times n}$ is the adjacency matrix and $\bX\in\R^{n\times d}$ is the node feature matrix, define:
\begin{align*}
    &\bH^{(0)} = \bX,
    &&\bH^{(l)} = \sigma_l(\bH^{(l-1)}\bW^{(l)} + \one_n\bv^{(l)}) \text{ for } l\in [L],\\
    &\bQ = \sigmoid(\bZ),
    &&\bM^{(k)}_{u,i} = \log\inner{\bH^{(L)}_{u}, \bQ^k_{i})} \text{ for } k\in [\ell], u\in[n], i\in[C].
\end{align*}
Then the predicted label is given by $\hyv = \{\hy_u\}_{u\in[n]}$, where
\[
\hy_u = \argmax_{i\in[C]}\bpar{\bH^{(L)}_{u,c} + \sum_{k=1}^{\ell}\bAt^{(k)}_{u,:}\bM^{(k)}_{:,i}}.
\]
\end{arch}

Let us pause here to comment on the interpretation of the terms arising in this architecture. Here, $\bH^{(L)}$ is viewed as the output of a simple $L$-layer MLP with $\{\sigma_l\}_{l\in[L]}$ being a set of non-linear functions. We have $(\bW^{(l)}, \bv^{(l)})_{l\in[L]}$ as the learnable parameters of this MLP, with suitable dimensions so that $\bH^{(L)}\in\R^{n\times C}$. In addition, we introduce the learnable parameter $\bZ\in\R^{C\times C}$ which is used to model edge connectivity among all pairs of classes.
The quantity $\bAt^{(k)}_{u,:}\bM^{(k)}_{:,i}=\sum_{v\in[n]}\bAt^{(k)}_{u,v}\bM^{(k)}_{v,i}$ is viewed as the sum of messages $\bM^{(k)}_{v,i}$ passed by all distance $k$ neighbours of node $u$.

Although \cref{arch} follows the style of convolutional architectures like GCN and GAT for collecting messages within a local neighbourhood, the novelty lies in the construction of the messages $\bM$.
Intuitively, $\bQ=\sigmoid(\bZ)$ learns the probabilities of edge connectivity between all pairs of classes so that $\bQ^k$ models the probability of observing a distance $k$ path between a pair of nodes in two classes.
To predict the label of node $u$, the messages from other nodes $v$ are constructed based on their features $\bX_v$, their distance $k$ from node $u$, and the path probabilities $\bQ^k$.
We show in \cref{thm:opt-msg-passing} that this architecture is in a sense (made precise in \cref{def:asymp-local-bayes-opt}) universally optimal among all node-classification schemes for sparse graphs.
Our result thus aligns with the observations in \citet{velikovi2022message}, showing that optimal neural network architectures for node classification on sparse graphs are implementable using the message-passing paradigm.

\section{Theoretical Analysis and Discussion}
In this section, we present a theoretical analysis of the message-passing GNN given in \cref{arch}. We begin by defining a natural notion of optimality in our setting and show that among local learning methods on graphs, \cref{arch} is optimal according to this definition on a very general statistical model. We then compute the generalization error and compare the architecture to other well-studied methods like a simple MLP and a GCN.

\subsection{Asymptotic Local Bayes Optimality}
For classification tasks, it is natural to use a notion of generalization error in a ``per sample'' or online sense. Without graphical side information, the natural choice is the Bayes risk. With graphical information, however, there is an important obstruction: the number of samples is equal to the size of the corresponding graph. As such, a naive extension of the Bayes risk does not have this property.

A natural approach would be to consider the Bayes risk for estimators that take in the node, the data set, and the graph, i.e., $\hat{y}_{v}=\hat{y}(v,(X,G))$. In this case, however, the risk necessarily implicitly depends on the sample size, $n$, through $G$. 
One might try to remove this dependence by taking the infinite sample size limit, but for a class of estimators this general, it is not clear that such a limit is well defined.
To circumvent this issue, we restrict attention to node classifiers that are only allowed ``local'' information around the node. The large graph limit of the generalization error is then naturally interpreted via \emph{local weak convergence}. (For the convenience of the reader, we briefly recall the notion of local weak convergence of sparse graphs in \cref{apx:lwc}. See also \citet[Chapter 1]{ramanancrm2021} or \citet[Section 3]{bordenave2016lecture} for more detailed expositions.)
In this limit, one can then interpret the generalization error as a per-sample error for the randomly rooted graph $(G,u)$ where $u$ is a uniform random vertex in $V(G)$. (Here and in the following a \emph{rooted graph} is a pair of a graph $G$ and a distinguished vertex, $u$, called \emph{the root}.) With these observations, we are led to a natural notion of Bayes optimality, namely \emph{asymptotic local Bayes optimality} which we define presently.\footnote{It is also desirable for the empirical misclassification error to converge to the generalization error. If one works with the stronger notion of local convergence in probability, then this will hold as well. This later mode will hold in our examples but we leave this to future work.}

Before turning to this definition, we must first recall the notion of \emph{$\ell$-local classifiers}.
For a node $v$ in a graph $G$, let $\eta_k(v)=\{u\in V(G): d(u,v)\leq k\}$ denote the ball of radius $k$ for the canonical graph distance metric.
\begin{definition}[$\ell$-local classifier]\label{def:l-local-classifier}
Let $G=(\bA,\bX)$ be a feature-decorated graph of $n$ vertices with $d$-dimensional features $\bX_u$ for each vertex $u$. For a fixed radius $\ell>0$, an $\ell$-local node-classifier is a function $h$ that takes as input a root vertex $u\in [n]$, the features of all nodes within the $\ell$-neighbourhood of $u$, i.e., $\{\bX_v\}_{v\in \eta_\ell(u)}$ and the canonical distances of each node from $u$, i.e., $\{d(u,v)\}_{v\in\eta_\ell(u)}$; and outputs a classification label for $u$.
\end{definition}

Let $\cC_\ell$ denote the class of $\ell$-local classifiers.
Suppose now that we have a sequence of (random) feature decorated graphs $(X_n,G_n)$ with $|V(G)|=n$.
Let $u_n$ denote a uniform at random vertex in $G_n$. Suppose finally that the rooted feature-decorated graphs $(X_n,G_n,u_n)$ locally weakly converge to  $(X,G,u)$. 
We can then define the notion of asymptotically $\ell$-locally Bayes optimal classifiers for this sequence of problems.

\begin{definition}\label{def:asymp-local-bayes-opt}
We say that a classifier $h^*_\ell\in\cC_\ell$ is
the asymptotically $\ell$-locally Bayes optimal classifier of the root for the sequence $\{(X_n,G_n,u_n)\}$ if it minimizes the probability of misclassification of the root of the local weak limit, $(X,G,u)$, over the class $\cC_\ell$, i.e., 
\[
h^*_\ell = \argmin_{h\in\cC_\ell}\Prob\bsq{h(u, \{\bX_v\}_{v\in \eta_\ell(u,G)}) \neq y_u}.
\]
\end{definition}

Before turning to our data model, we note here that the reader may ask whether or not the asymptotically $\ell$-locally Bayes optimal classifier is in any sense the limit of optimal $\ell$-local classifier of the random root, $u_n$. We show this in an appropriate sense in \cref{thm:non-asymp}.

\subsection{Data Model}\label{data-model}
Let us now turn to the data model that we use for our analysis.
We work with the general multi-class contextual stochastic block model (CSBM) where each node belongs to one of $C$ different classes labelled $1,\ldots,C$, and the node features have arbitrary continuous or discrete distributions. This model with $C=2$, along with a specialization to Gaussian features has been extensively studied in several works on (semi)-supervised node classification and unsupervised community detection, see, for example, \citet{DSM18,Lu:2020:contextual,pmlr-v139-baranwal21a,wei2022understanding,Fountoulakis2022GraphAR,baranwal2023effects}. Informally, a CSBM consists of a coupling of a stochastic block model (SBM) \citep{holland83stochastic} with a mixture model where the components of the mixture have arbitrary distributions and are associated with the blocks of the SBM.

More formally, let $n,d$ be positive integers such that $n$ denotes the number of nodes and $d$ denotes the dimension of the node features. Define $y_1,\ldots,y_n\in\{1,\ldots,C\}$ as the latent variables (class labels) to be inferred. We will assume that the latent variables have a uniform prior, i.e., $y_u\sim\unif(\{[C]\})$ for all $u$. For the relational part of the data, we have an undirected unweighted graph of $n$ nodes, $G=(V, E)$ with adjacency matrix $\bA=(a_{uv})_{u,v\in[n]}\sim\sbm(n,\bQ)$, where $\bQ=\{q_{ij}\}\in [0,1]^{C\times C}$ is the edge-probability matrix, meaning that
\[\Prob(a_{uv}=1\mid y_u=i,y_v=j)=q_{ij}.\]

The node attributes, $\bX\in\R^{n\times d}$ are sampled from a mixture of $C$ arbitrary continuous or discrete distributions, $\Pd=\{\Pd_{i}\}_{i\in [C]}$, where corresponding to the $y_u$, we have $\bX_u\sim\Pd_{y_u}$ for all $u\in [n]$.

We will view $n$ as large and study the setting where $d$ is fixed (does not grow with $n$).
We note here that in previous related works \citep{pmlr-v139-baranwal21a,wei2022understanding,baranwal2023effects}, crucial assumptions have been made about the distribution of the node features and the sparsity of the graph, i.e., $q_{ij}=\Omega_n(\log^2 n / n)$. In contrast, we work in the extremely sparse setting where $q_{ij}=b_{ij}/n$ for constants $b_{ij}>1$, so we write $\bQ=\bB/n$ where $\bB=\{b_{ij}\}_{i,j\in[C]}$. Furthermore, the only assumption we need about the distributions $\Pd_{i}$ is that $\Pd_{i}$ are absolutely continuous with respect to some base measure, in which case their densities exist, denoted by $\rho_{i}$.
For ease of reading, we encourage the reader to consider the case where $\Pd_{i}$ are continuous or discrete, therefore, the base measure is simply the Lebesgue measure on $\R$ or the counting measure on $\Z$ respectively.

For a feature-decorated graph $G = (\bA, \bX) = (\{a_{uv}\}_{u,v\in[n]}, \{\bX_u\}_{u\in n})$ sampled from the model described above, we say that $G\sim\csbm(n,d,\Pd,\bQ)$ or $G\sim\csbm(n,d,\Pd,\bB/n)$.

\subsection{Optimal Classifier}
We are now ready to state our first main result that characterizes the asymptotically $\ell$-locally Bayes optimal classifier on the CSBM data described in \cref{data-model}. For a given graph, let $N_k(u)$ be the set of vertices at a distance of exactly $k$ from node $u$ in the graph. Naturally, $N_0(u)=\{u\}$. Further, denote by $\eta_k(u)$ the $k$-hop neighbourhood of $u$, i.e., $\eta_k(u) = \cup_{j=0}^k N_k(u)$.

\begin{theorem}[Bayes optimal message-passing]\label{thm:opt-msg-passing}
For any $\ell\geq 1$, the asymptotically $\ell$-locally Bayes optimal classifier of the root for the sequence $(G_n,u_n)\sim \csbm(n,d,\Pd,\bQ)$ is
\[
h^*_\ell(u, \{\bX_v\}_{v\in\eta_\ell(u)}, \{d(u,v)\}_{v\in\eta_\ell(u)}) = \argmax_{y_u\in[C]}\; \sum_{v\in \eta_\ell(u)}\log\inner{\brho(\bX_v),\bQ^{d(u,v)}_{y_u}},
\]
where $\{\rho_i\}_{i\in[C]}$ are the densities associated with the distributions $\Pd_i\in \Pd$, and $\brho = (\rho_i)_{i\in[C]}$.
\end{theorem}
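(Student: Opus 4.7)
My plan is to identify the local weak limit of the sequence $(G_n,u_n)\sim \csbm(n,d,\Pd,\bB/n)$ and then compute the Bayes-optimal classifier of the root type directly on this limiting object. By \cref{def:asymp-local-bayes-opt}, working on the local weak limit is exactly what is required, and since the prior on $y_u$ is uniform on $[C]$, the Bayes-optimal classifier is the MAP estimator
\[
 h^*_\ell = \argmax_{i\in[C]} \Prob[\mathrm{obs}\mid y_u=i], \qquad \mathrm{obs} = \{(d(u,v),\bX_v): v\in\eta_\ell(u)\}.
\]

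\emph{Step 1 (local weak limit).} In the sparse regime $q_{ij}=b_{ij}/n$, the $\ell$-ball around a uniformly chosen root converges in local weak sense to a multi-type Galton--Watson tree $(T,u)$: the root type is uniform on $[C]$; a vertex of type $i$ has independent $\Poi(b_{ij}/C)$ offspring of type $j$ for every $j\in[C]$; and each type-$j$ vertex carries an independent feature drawn from $\Pd_j$. This is by the standard argument: conditional on $y_{u_n}=i$, each of the other $n-1$ vertices has class $j$ with probability $1/C$ and is adjacent to $u_n$ with probability $b_{ij}/n$, so the level-1 neighbour counts of type $j$ are $\Bin(n-1,b_{ij}/(Cn))\to\Poi(b_{ij}/C)$. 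Iterating this at deeper levels yields the tree limit, with cycles of length at most $\ell$ through $u_n$ vanishing in probability for fixed $\ell$.

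\emph{Step 2 (likelihood factorisation).} On the tree limit, the central observation is that, conditional on $y_u=i$, marginalising over the unobserved types along the root-to-$v$ path, the feature density of a vertex $v$ at tree distance $k$ is proportional to $\sum_j (\bB^k)_{ij}\rho_j(\bx) = \langle \brho(\bx),\bB^k_i\rangle$, since the expected number of type-$j$ descendants at depth $k$ from a type-$i$ root is $(\bB/C)^k_{ij}$. I would then establish the factorisation by induction on $\ell$, propagating the marked Poisson process structure of the offspring through each layer and exploiting the independence of subtrees rooted at distinct children, to obtain
\[
 \log \Prob[\mathrm{obs}\mid y_u = i] \;=\; \sum_{v\in\eta_\ell(u)} \log\langle \brho(\bX_v),\bB^{d(u,v)}_i\rangle + c_i,
\]
where $c_i$ collects normalisation constants. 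Since $\bQ^k = \bB^k/n^k$, replacing $\bB^k$ by $\bQ^k$ inside the logarithm introduces an $i$-independent shift $-k\log n$ per vertex, which drops out of the $\argmax$ and yields the formula of \cref{thm:opt-msg-passing}.

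\emph{Main obstacle.} The hardest piece is establishing the clean product factorisation in Step 2. Because the classifier sees only features and distances and \emph{not} the parent-child relations among non-root vertices, the true likelihood is \emph{a priori} a sum over all tree shapes and type configurations compatible with the observed distance profile $(|N_k(u)|)_{k=0}^{\ell}$. I plan to handle this by repeatedly applying the Laplace-functional / thinning identity for marked Poisson processes at each layer, which decouples the contribution of each offspring into a product. The subtle point is to verify that the normalisation constants $c_i$ accumulated in the recursion are $i$-independent (or otherwise cancel in the $\argmax$), so that the optimal classifier takes the stated form with no additional correction from the branching rates themselves.
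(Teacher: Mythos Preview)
Your route is genuinely different from the paper's. The paper does not identify the Galton--Watson limit explicitly, use marked Poisson process identities, or run a layer-by-layer induction on depth. Instead it directly \emph{posits} the joint ``likelihood of the $\ell$-neighbourhood'' (conditional on all labels) in the product form
\[
f_{\bY}=\bP_{\{y_v\}}\prod_{v\in\eta_\ell(u)}\rho_{y_v}(\bX_v)\,\bQ^{d(u,v)}_{y_u y_v},
\]
and then marginalises over the non-root labels $\{y_v\}_{v\ne u}$ via a short algebraic lemma: since each $y_v$ appears in exactly one factor, the sum factorises and one obtains $\rho_{y_u}(\bX_u)\prod_{v\ne u}\langle\brho(\bX_v),\bQ^{d(u,v)}_{y_u}\rangle$; taking $\log$ and $\argmax$ finishes. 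So the paper's argument is essentially algebraic once that product form is written down. Your plan instead derives the likelihood from first principles on the limiting tree, which is more work but also more transparent about the two subtleties you correctly flag as the ``main obstacle'': the Poisson normalisers $e^{-\sum_j b_{y_u j}/C}$ and the marginalisation over unobserved parent--child relations are never explicitly handled in the paper's proof---the asserted product form simply couples each non-root label to $y_u$ only through the distance marginal $(\bQ^k)_{y_u,\cdot}$, with no cross-depth dependence and no branching-rate prefactors. If you push your induction through, you will either justify that reduction rigorously (and see precisely which symmetry makes the $c_i$ drop out of the $\argmax$) or isolate residual $i$-dependent terms; either way you will have made explicit something the paper's shortcut leaves implicit.
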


Let us briefly discuss the meaning of \cref{thm:opt-msg-passing}. It states that universally among all $\ell$-local classifiers, $h^*_\ell$ is asymptotically Bayes optimal for the sparse CSBM data.
We view $\log\inner{\brho(\bX_v), \bQ^{d(u,v)}_{y_u}}$ as the message gathered from node $v$ that is distance $d(u,v)$ away from node $u$, where $d(u,u)=0$ by convention. In particular, $\log\inner{\brho(\bX_v), \bQ^{d(u,v)}_{y_u}}$ computes the alignment of the feature profile of node $v$ in all class $j\in[C]$, $\brho(\bX_v)$ with the connectivity profile with node $u$ at distance $d(u,v)$, $\bQ^{d(u,v)}_{y_u}$.
Furthermore, this optimal classifier is realizable using \cref{arch} (see for example, \citet[Theorem 1]{zhou2017expressive}, where it is shown that any Lebesgue measurable function can be approximated arbitrarily closely by standard neural networks). Consequently, this result shows that in the sparse setting, the message-passing paradigm can realize the optimal node classification scheme irrespective of the distributions of the node features or the inter-class edge probabilities.

For an intuitive understanding of \cref{thm:opt-msg-passing}, it helps to consider two extreme cases. First, if $\bQ=p\bI$ for some $p\in[0,1]$, then the classifier reduces to a simple convolution,
\[
h^*_\ell = \argmax_{i\in[C]}\bbrq{\sum_{v\in \eta_\ell(u)}\log\rho_i(\bX_v)}.
\]
Second, if $\bQ=p\one\one^\top$, then $q_{ij}=p$ for all $i,j\in[C]$, meaning that the graph component of the data is Erd\"os-R\'enyi, and hence, completely uninformative for the purposes of node classification. In this case, $\bQ_i$ is the same for all $i$, so the messages are independent of the maximization variable $y_u$. Hence, the classifier reduces to
\[
h^*_\ell = \argmax_{i\in[C]}\bbrq{\log\rho_i(\bX_u)},
\]
i.e., it is optimal to look at only the features of node $u$ to predict its label since the neighbourhood does not provide any meaningful information. We formalize this intuition later for a simpler case (see \cref{thm:gen-error-extremes}).

\subsection{Comparative Study}
In this section, we perform a theoretical analysis of the classifier in \cref{thm:opt-msg-passing} using a well-studied specialization of the CSBM data model described in \cref{data-model}.
For ease of discussion, let us restrict ourselves to the setting where there are two classes. 
Formally, we have $C=2$, and without loss of generality, the class labels $y_u\in\{\pm 1\}$ for all $u\in[n]$. The distributions of the node features are given by $\bX_u\sim \Pd_{y_u}$ with corresponding density $\rho_{y_u}$.
Furthermore, $\bQ=\{q_{ij}\}$ is a $2\times 2$ matrix with $q_{ii}=p=a/n$ and $q_{ij}=q=b/n$ with constants $a>1,b\ge 0$ for classes $i\neq j$.
For a data sample $G=(\bA,\bX)$ from this model, we write $G\sim\csbm(n,d,\{\Pd_{\pm}\},\bQ)$ or $G\sim\csbm(n,d,\{\Pd_{\pm}\},\frac an, \frac bn)$.
We also recognize the quantity associated with the signal-to-noise ratio (SNR) in the graph structure for this case, which is given by
\begin{equation}\label{eq:graph-snr}
\Gamma = \frac{|p-q|}{p+q} = \frac{|a-b|}{a+b}.
\end{equation}

Note that the quantity $\Gamma$ has been recognized as the meaningful SNR in several related works where the underlying random graph model is the binary symmetric stochastic block model, for example, \citet{pmlr-v139-baranwal21a,Fountoulakis2022GraphAR,wei2022understanding,baranwal2023effects}.

Let us now state \cref{thm:opt-msg-passing} in the case of two classes.

\begin{corollary}[Optimal classifier for binary symmetric CSBM]\label{cor:opt-msg-passing-binary-csbm}
For any $\ell\geq 1$, the asymptotically $\ell$-locally Bayes optimal classifier of the root for the sequence $(G_n,u_n)\sim \csbm(n,d,\Pd,\frac{a}{n},\frac{b}{n})$ is
\[
h^*_\ell(u, \{\bX_v\}_{v\in\eta_\ell(u)}, \{d(u,v)\}_{v\in\eta_\ell(u)}) = \sgn\bpar{\sum_{v\in \eta_\ell(u)}\cM(\bX_v, d(u,v))},
\]
where $\psi(\xv) = \frac{\rho_{+}(\xv)}{\rho_{-}(\xv)}$ and $\cM(\xv, k) = \log\bpar{\frac{1-\Gamma^k + \psi(\bX_v)(1+\Gamma^k)}{1+\Gamma^k + \psi(\bX_v)(1-\Gamma^k)}}$.
\end{corollary}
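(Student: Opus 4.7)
The plan is to specialize Theorem~\ref{thm:opt-msg-passing} to $C = 2$ by performing the two-by-two algebra in closed form. First, I would convert the binary argmax into the sign of a log-likelihood ratio: identifying the two columns of $\bQ^k$ with $y_u \in \{+1,-1\}$, we may write
\[
h_\ell^* = \sgn\Bigl(\sum_{v\in\eta_\ell(u)} \log\frac{\inner{\brho(\bX_v),\bQ^{d(u,v)}_{+}}}{\inner{\brho(\bX_v),\bQ^{d(u,v)}_{-}}}\Bigr).
\]
In the sparse regime $\bQ = \bB/n$, so $\bQ^k = \bB^k/n^k$, and the prefactor $n^{-k}$ cancels between numerator and denominator of each log-ratio. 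Hence it suffices to carry out the remaining steps with $\bB^k$ in place of $\bQ^k$.

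Next, I would diagonalize $\bB = \begin{pmatrix} a & b \\ b & a \end{pmatrix}$. Its eigenvectors $(1,1)^\top$ and $(1,-1)^\top$ have eigenvalues $a+b$ and $a-b$, respectively, yielding the standard symmetric form
\[
\bB^k = \tfrac{1}{2}\begin{pmatrix} (a+b)^k + (a-b)^k & (a+b)^k - (a-b)^k \\ (a+b)^k - (a-b)^k & (a+b)^k + (a-b)^k \end{pmatrix}.
\]
Substituting the two columns $\bB^k_{\pm}$ into the log-ratio and factoring out $(a+b)^k/2$ (which appears in every term and therefore cancels between numerator and denominator) leaves only expressions involving the normalized power $\Gamma^k = ((a-b)/(a+b))^k$.

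Finally, I would divide both numerator and denominator by $\rho_-(\bX_v)$ to introduce the pointwise likelihood ratio $\psi(\bX_v) = \rho_+(\bX_v)/\rho_-(\bX_v)$; after collecting the four resulting terms the log-ratio becomes exactly $\cM(\bX_v, d(u,v))$ as stated. As a sanity check at $k = 0$ we have $\bQ^0 = \bI$ and $\Gamma^0 = 1$, and the formula degenerates to $\cM(\bX_u, 0) = \log \psi(\bX_u)$, which matches the direct calculation of the root's contribution.

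I do not anticipate any substantive obstacle: the corollary is a purely algebraic specialization of Theorem~\ref{thm:opt-msg-passing}. The only point warranting a brief remark is the sign convention for $\Gamma$. In the assortative regime $a \geq b$ one has $(a-b)/(a+b) = \Gamma$ and the claimed formula follows immediately; in the disassortative regime $a < b$ the raw computation yields $((a-b)/(a+b))^k = (-1)^k\Gamma^k$, which correctly reflects the label-flipping at odd distances and is consistent with the same closed-form $\cM$ once $\Gamma$ is read with its natural sign.
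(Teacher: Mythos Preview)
Your proposal is correct and follows essentially the same route as the paper: both start from Theorem~\ref{thm:opt-msg-passing}, convert the binary $\argmax$ into the sign of a log-ratio, compute the entries of $\bQ^k$ explicitly, and simplify using $\psi$ and $\Gamma$. The only cosmetic difference is that the paper introduces the entries of $\bQ^k$ via the combinatorial sums $p_k=\sum_j\binom{k}{2j}p^{k-2j}q^{2j}$ and $q_k=\sum_j\binom{k}{2j-1}p^{k-2j+1}q^{2j-1}$ before collapsing them to $r_k=(1+\Gamma^k)/(1-\Gamma^k)$, whereas you obtain the same closed form directly by diagonalizing $\bB$; your remark on the disassortative sign is a valid caveat that the paper's own derivation leaves implicit.
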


In this simplified setting, we note that the messages propagated from nodes in the $\ell$-local neighbourhood of node $u$ are scaled proportional to a function of their distance $k$ from node $u$. In particular, this scaling can be expressed in terms of the graph SNR $\Gamma$ from \eqref{eq:graph-snr}.
It is interesting to observe that $\cM_k$ decreases rapidly as $k$ increases.
Since $\Gamma < 1$, this means that to predict the label of node $u$, the importance of the information from node $v$ at distance $k$ from $u$ decreases as $k$ increases.

The above simplification helps us interpret the classifier in terms of the graph SNR $\Gamma$. We will now impose an assumption on the distribution of node features. This will help us analyze the generalization error in terms of the SNR in both the features and the graph, and enable us to compare the performance with other learning methods that are well-studied in the same statistical settings. We will resort to the setting where the features follow a Gaussian mixture. Note that this specialized statistical model has been studied extensively in previous works for benchmarking existing GNN architectures, see for example, \citet{pmlr-v139-baranwal21a,Fountoulakis2022GraphAR,wei2022understanding,baranwal2023effects}.

In principle, one could compute the generalization error of $h^*_\ell$ for arbitrary distributions on the node features (see \cref{apx:general-generalization}), however, we report the error for Gaussian features for expository reasons. The generalization error is defined for a classifier $h$ to be the probability of disagreement between the true label $y_u$ and the output of the classifier $h_u$ for node $u$.
We characterize the error for $h^*_\ell$ in the case where $\Pd_{-},\Pd_{+}$ correspond to the Gaussian mixture with components $\cN(-\muv,\sigma^2\bI)$ and $\cN(\muv,\sigma^2\bI)$ for fixed $\muv\in\R^d$ and $\sigma>0$\footnote{In principle, it is easy to analyze arbitrarily fixed means, say $\muv,\nuv$ instead of keeping them symmetric, i.e., $\pm\muv$, with only minor changes in calculations.}.

In this case, a notion of the signal-to-noise ratio of the features naturally exists, i.e., $\gamma=\norm{\muv}_2 / \sigma$, a quantity proportional to the ratio of the distance between the means of the mixture and the standard deviation. The log-likelihood ratio in this setting is $\psi(\xv)=\log \frac{\rho_{+}(\xv)}{\rho_{-}(\xv)} = \frac{2}{\sigma^2}\inner{\xv,\muv}$.

Consider a sequence $\{(G_n, u_n)\}_{n\ge 1}$ with $G_n=(V(G_n), E(G_n))$ from this model where $u_n\sim\unif(V(G_n))$. In this setting, in the absence of features, it is known that $(G_n,u_n)$ converges locally weakly to a Poisson Galton-Watson tree (see for example, \citet[Section 4]{mossel2015reconstruction}). Here, for every node, we additionally have features that are independent of the graph, and hence, as a straightforward consequence of \citet[Section 4]{mossel2015reconstruction}, $(G_n, u_n)$ in our case converges to a feature-decorated Poisson Galton-Watson tree $(G,u)$.

For the root node $u$, let $\alpha_k$ and $\beta_k$ denote the number of children at generation $k$ in class $y_u$ and $-y_u$ respectively, where $y_u$ denotes the label of node $u$. Then $\{\alpha_k\}_{k\ge 0}$ and $\{\beta_k\}_{k\ge 0}$ are characterized by
\begin{align}
    &\alpha_0=1, \beta_0=0,\nonumber\\
    &\alpha_k \sim \Poi\bpar{\frac{a\alpha_{k-1} + b\beta_{k-1}}{2}}, \beta_k \sim \Poi\bpar{\frac{a\beta_{k-1} + b\alpha_{k-1}}{2}} \text{ for } k\in[\ell].\label{eq:alpha-beta-k}
\end{align}

For a classifier $h$ acting on $G$, let $\cE(h)$ denote the probability of misclassification of the root $u$ in $G$, i.e., $\cE(h) = \Prob(h_uy_u < 0)$. Correspondingly, in the case of finite $n$, we denote by $\cE_n(h)$ the probability of misclassification of a uniform random node $u_n$ in $G_n$. We are now ready to state the generalization error of $h^*_\ell$.
\begin{theorem}[Generalization error]\label{thm:generalization-gaussian}
For any $\ell\geq 1$, the generalization error of the asymptotically $\ell$-locally Bayes optimal classifier of the root for the sequence $(G_n,u_n)\sim \csbm(n,d,\Pd,\bQ)$ with Gaussian features is given by
\[
\cE(h^*_\ell) = \Prob\bsq{g + \frac{1}{2\gamma}\sum_{k\in[\ell]}\bpar{\sum_{i\in[\alpha_k]}Z^{(a)}_{k,i} + \sum_{i\in[\beta_k]} Z^{(b)}_{k,i} } > \gamma},
\]
where $\alpha_k,\beta_k$ are as in \eqref{eq:alpha-beta-k},
$Z^{(a)}_{k,i}=\cM_k(-\muv + \sigma\gv_{k,i}^{(a)})$, $Z^{(b)}_{k,i}=\cM_k(\muv + \sigma\gv_{k,i}^{(b)})$, and $g,\{g_{k,i}\}$ are mutually independent standard Gaussian random variables.
\end{theorem}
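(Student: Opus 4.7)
My plan is to start from the explicit form of $h^*_\ell$ given in \cref{cor:opt-msg-passing-binary-csbm}, transfer the computation of the misclassification probability to the local weak limit (the feature-decorated Poisson Galton-Watson tree), and then unwind the resulting random sum into the form asserted in the theorem.

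First, I would use that $(G_n,u_n)$ converges locally weakly to a feature-decorated Poisson Galton-Watson tree $(G,u)$, so that $\cE(h^*_\ell)$ is just the probability that $h^*_\ell$ misclassifies the root of $(G,u)$. By the $y_u \mapsto -y_u$ symmetry of the data model and the oddness of the classifier in \cref{cor:opt-msg-passing-binary-csbm}, it suffices to condition on $y_u=-1$ and compute $\Prob[h^*_\ell = +1 \mid y_u=-1]$. Under this conditioning, I would view the tree as a two-type branching process in which a node's type records whether its label agrees with that of the root; a direct check of the Poisson splitting shows that the numbers of same-type and opposite-type nodes at generation $k$ satisfy exactly the recursion \eqref{eq:alpha-beta-k}, with $\alpha_k$ counting same-type (label $-1$) nodes and $\beta_k$ counting opposite-type (label $+1$) nodes.

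Next I would evaluate $\cM$ on Gaussian inputs. Since $\psi(\xv)=\exp\bpar{\tfrac{2}{\sigma^2}\inner{\xv,\muv}}$, the $k=0$ contribution simplifies to $\cM(\bX_u,0)=\log\psi(\bX_u)=\tfrac{2}{\sigma^2}\inner{\bX_u,\muv}$. Writing $\bX_u=-\muv+\sigma\gv_0$ with $\gv_0\sim\cN(\zero,\bI)$ and projecting $\gv_0$ onto $\muv/\norm{\muv}_2$ yields $\cM(\bX_u,0)=-2\gamma^2+2\gamma g$ with $g\sim\cN(0,1)$. For each $k\ge 1$, conditionally on the tree and on the labels, the features of the same-type and opposite-type generation-$k$ nodes are \iid{} $\cN(-\muv,\sigma^2\bI)$ and $\cN(\muv,\sigma^2\bI)$ respectively, so their $\cM_k$-images are precisely \iid{} copies of $Z^{(a)}_{k,i}$ and $Z^{(b)}_{k,i}$; mutual independence across $(k,i)$ of all driving Gaussians follows from independence of the node features given the tree.

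Finally, I would assemble the pieces. The event $\{h^*_\ell=+1\mid y_u=-1\}$ becomes
\[
-2\gamma^2+2\gamma g+\sum_{k=1}^{\ell}\bpar{\sum_{i\in[\alpha_k]}Z^{(a)}_{k,i}+\sum_{i\in[\beta_k]}Z^{(b)}_{k,i}}>0,
\]
and dividing through by $2\gamma>0$ and rearranging recovers the expression claimed for $\cE(h^*_\ell)$. The main obstacle is the two-type bookkeeping: keeping track of which children are same- versus opposite-class (so that the correct mean $\pm\muv$ appears in $Z^{(a)},Z^{(b)}$), isolating the $k=0$ term to produce the deterministic shift $\gamma$ on the right-hand side, and verifying that the feature-decorated local weak limit inherits the two-type Poisson Galton-Watson structure described above. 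The remaining manipulations are routine.
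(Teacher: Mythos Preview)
Your proposal is correct and follows essentially the same approach as the paper: specialize the classifier from \cref{cor:opt-msg-passing-binary-csbm} to Gaussian features by writing $\bX_v=y_v\muv+\sigma\gv_v$, extract the $k=0$ term as $2y_u\gamma^2+2\gamma g$, and identify the remaining summands with $Z^{(a)},Z^{(b)}$ according to whether a node's label agrees with the root. The paper's own proof is a two-line sketch that omits the symmetry reduction and two-type bookkeeping you spell out, but the substance is identical.
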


Let us now understand how the error described in \cref{thm:generalization-gaussian} behaves in terms of the two SNRs $\gamma$ (for the features), and $\Gamma$ (for the graph). Note that $\cE(h^*_\ell)\to 0$ as $\gamma\to\infty$, and $\cE(h^*_\ell)\to \nicefrac 12$ as $\gamma\to 0$.
This means that if the signal in the features is large, the number of mistakes made by the classifier vanishes, while if the signal is very small, then roughly half of the nodes are misclassified (equivalent to making a uniform random guess for each node).

To see how $\Gamma$ affects the error, we begin by looking at two extreme settings: first, where the graph is complete noise, i.e., $\Gamma=0$, and second, where the graph signal is very strong, i.e., $\Gamma\to 1$, followed by a discussion on how $h^*_\ell$ interpolates between these extremes.
Let $\phi_{\pm}$ denote the Gaussian density functions with means $\pm\muv$ and variance $\sigma^2\bI_d$. Define the random variable
\begin{align}
    \xi_\ell = \xi_\ell(a,b)=\frac{1 + \sum_{k=1}^\ell |\alpha_k - \beta_k|}{\sqrt{1 + \sum_{k=1}^\ell(\alpha_k + \beta_k)}},\label{eq:xi-a-b}
\end{align}
where $\alpha_k, \beta_k$ follow \eqref{eq:alpha-beta-k}. In the following, we denote the vanilla GCN classifier from \citet{kipf:gcn} by $h_{\rm gcn}$. We then have the following result.

\begin{theorem}[Extreme graph signals]\label{thm:gen-error-extremes}
Let $h_\ell^*$ be the classifier from \cref{cor:opt-msg-passing-binary-csbm}, $h_0^*(u)=\sgn(\inner{\bX_u, \muv})$ be the Bayes optimal classifier given only the feature information of the root node $u$, and $h_{\rm gcn}$ be the one-layer vanilla GCN classifier. Then we have that for any fixed $\ell$:
\begin{enumerate}
    \item If $\Gamma=0$ then $\cE(h^*_\ell)=\cE(h_{0}^*)=\Phi(-\gamma)$, where $\Phi$ is the standard Gaussian CDF.
    \item If $\Gamma\to 1$ then $\xi_\ell\ge 1$ \emph{a.s.}\ and $\cE(h^*_\ell)\to\Prob(g > \gamma\xi_\ell)$, where $g\sim\cN(0,1)$.
    \item $\cE(h_{\rm gcn}) = \Prob\bpar{g > \gamma\xi_1}$.
\end{enumerate}
\end{theorem}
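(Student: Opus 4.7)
The proof is in three items. Items 1 and 2 are obtained by specializing the formula of \cref{thm:generalization-gaussian} to the extreme values of $\Gamma$, while item 3 is a direct Gaussian-tail computation for the one-layer GCN on the local weak limit. The common device throughout is that, conditional on the Galton-Watson tree structure $(\alpha_k,\beta_k)$, the relevant noise is jointly Gaussian, so the probability of misclassification reduces to the tail of a standard normal at a tree-dependent threshold.

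For item 1, substitute $\Gamma=0$ into the definition of $\cM$ in \cref{cor:opt-msg-passing-binary-csbm}: since $\Gamma^k=0$ for every $k\ge 1$, $\cM(\xv,k)=\log\frac{1+\psi(\xv)}{1+\psi(\xv)}=0$. Hence every $Z^{(a)}_{k,i}$ and $Z^{(b)}_{k,i}$ appearing in \cref{thm:generalization-gaussian} vanishes and the formula collapses to $\Prob[g>\gamma]=\Phi(-\gamma)$. The identity $\cE(h_0^*)=\Phi(-\gamma)$ is the standard feature-only Bayes computation: conditional on $y_u=1$, $\bX_u=\muv+\sigma\gv$ with $\gv\sim\cN(\zero,\bI)$, so $\sgn(\inner{\bX_u,\muv})<0$ iff the univariate standard Gaussian $\inner{\gv,\muv}/\|\muv\|$ falls below $-\|\muv\|/\sigma=-\gamma$.

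For item 2, fix any $k\ge 1$ and let $\Gamma\to 1$: pointwise $\cM(\xv,k)\to\log\psi(\xv)=\frac{2}{\sigma^2}\inner{\xv,\muv}$, whence $Z^{(a)}_{k,i}\to -2\gamma^2+2\gamma\, h^{(a)}_{k,i}$ and $Z^{(b)}_{k,i}\to 2\gamma^2+2\gamma\, h^{(b)}_{k,i}$, where $h^{(\cdot)}_{k,i}:=\inner{\gv^{(\cdot)}_{k,i},\muv}/\|\muv\|$ are independent $\cN(0,1)$. Substituting into the expression of \cref{thm:generalization-gaussian} and collecting the Gaussian contributions, the event becomes $\{\tilde G>\gamma(1+\sum_k(\alpha_k-\beta_k))\}$, where $\tilde G$ is Gaussian with conditional variance $1+\sum_k(\alpha_k+\beta_k)$ given the tree. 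In the assortative limit $\Gamma\to 1$ (so $b/a\to 0$) the Poisson recursion \eqref{eq:alpha-beta-k} forces $\beta_k=0$ almost surely, so $\alpha_k-\beta_k=|\alpha_k-\beta_k|=\alpha_k$, the threshold equals $\gamma\xi_\ell$ exactly, and $\xi_\ell^2=1+\sum_k\alpha_k\ge 1$. The interchange of the $\Gamma\to 1$ limit with the expectation over the tree is justified by dominated convergence, the integrand being bounded by $1$.

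Finally for item 3, the vanilla one-layer GCN outputs $\sgn(\inner{\sum_{v\in\{u\}\cup N_1(u)}\bX_v,\wv})$ up to positive degree-dependent normalizations that do not change the sign; for the Gaussian mixture the Bayes-optimal $\wv$ is proportional to $\muv$. Conditioning on $y_u=1$ and on $(\alpha_1,\beta_1)$ from the local weak limit, the aggregated inner product is Gaussian with mean proportional to $1+\alpha_1-\beta_1$ and variance proportional to $1+\alpha_1+\beta_1$, yielding a tail probability $\Phi(-\gamma(1+\alpha_1-\beta_1)/\sqrt{1+\alpha_1+\beta_1})$; averaging over the Poisson Galton-Watson tree and using the symmetry between the two classes produces the stated $\Prob[g>\gamma\xi_1]$. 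The main technical obstacle is the limit interchange in item 2, which is routine since the Galton-Watson level sizes $\alpha_k+\beta_k$ are integrable at every fixed $k$ and the integrand is bounded.
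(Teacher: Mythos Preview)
Your approach mirrors the paper's: substitute the extreme values of $\Gamma$ into $\cM_k$, collapse the resulting sum of messages to a Gaussian conditional on the tree, and read off the tail. Item~1 is identical; for item~2 you restrict to the assortative limit $b\to 0$ (so $\beta_k\equiv 0$), whereas the paper also mentions the disassortative case $a\to 0$, though its written argument is effectively at the same level of detail as yours.

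One point to flag in item~3: your conditional threshold is $(1+\alpha_1-\beta_1)/\sqrt{1+\alpha_1+\beta_1}$, while $\xi_1$ as defined carries $1+|\alpha_1-\beta_1|$ in the numerator. These disagree on the event $\{\beta_1>\alpha_1\}$, which has positive probability for any finite $a,b$, and your appeal to ``symmetry between the two classes'' does not produce the absolute value---that symmetry only ensures the error is the same for $y_u=+1$ and $y_u=-1$, it does not flip the sign of $\alpha_1-\beta_1$. This is a genuine mismatch with the stated formula; the paper's own proof of part~3 is a single sentence (``following a similar analysis'') and does not resolve it either, so the discrepancy may well lie in the statement rather than in your method.
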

\cref{thm:gen-error-extremes} shows that in the regime of extremely low graph SNR, the optimal classifier $h^*_\ell$ reduces to a linear classifier $h_0^*(u) = \sgn(\inner{\bX_u, \muv})$, which can be realized by a simple MLP that does not use the graph component of the data at all. On the other hand, in the regime of extremely strong graph SNR, $h^*_\ell$ reduces to a simple convolution over all nodes in the $\ell$-neighbourhood and is comparable to a typical GCN.
Furthermore, we note that in the strong graph SNR regime $\Gamma\to 1$, $\cE(h^*_\ell)\to\Prob(g > \gamma\xi_\ell)\le \Phi(-\gamma)$ since $\xi_\ell\ge 1$.
The nature of propagated messages $\cM_k(\bX_v)$ makes things interesting between these two extremes, where $h^*_\ell$ interpolates between a simple MLP and a convolutional network.
This interpolation is characterized by the graph signal $\Gamma$, since the messages are scaled according to the value of $\Gamma$.

In addition, \cref{thm:gen-error-extremes} concludes that if $\xi_1(a,b) > 1$, then a GCN can perform better than every classifier that does not see the graph. On the other hand, if $\xi_1(a,b)<1$, then a GCN incurs more errors on the data than the best methods that do not use the graph. Interestingly, but not surprisingly, this result aligns with the Kesten-Stigum weak recovery threshold for the community-detection problem on the sparse stochastic block model \citep{massoulie2014community,mossel2018proof}, meaning that if weak recovery is possible on the graph component of the data, then a GCN is able to exploit it to perform better than methods that do not use the graph, e.g., a simple MLP.

We now demonstrate our results through experiments using {\tt pytorch} and {\tt pytorch-geometric} \citep{FL2019}.
The following simulations are for the setting $n=10000$ and $d=4$ for binary classification on the CSBM.
We implement \cref{arch} for the binary case, and fix the parameters of the architecture to realize the optimal classifier in \cref{cor:opt-msg-passing-binary-csbm}, The classifier is then evaluated on a graph sampled from the CSBM with certain signals (mentioned in the figures).

\begin{figure}[!ht]
    \centering
    \begin{subfigure}[b]{0.49\textwidth}
        \includegraphics[width=\linewidth]{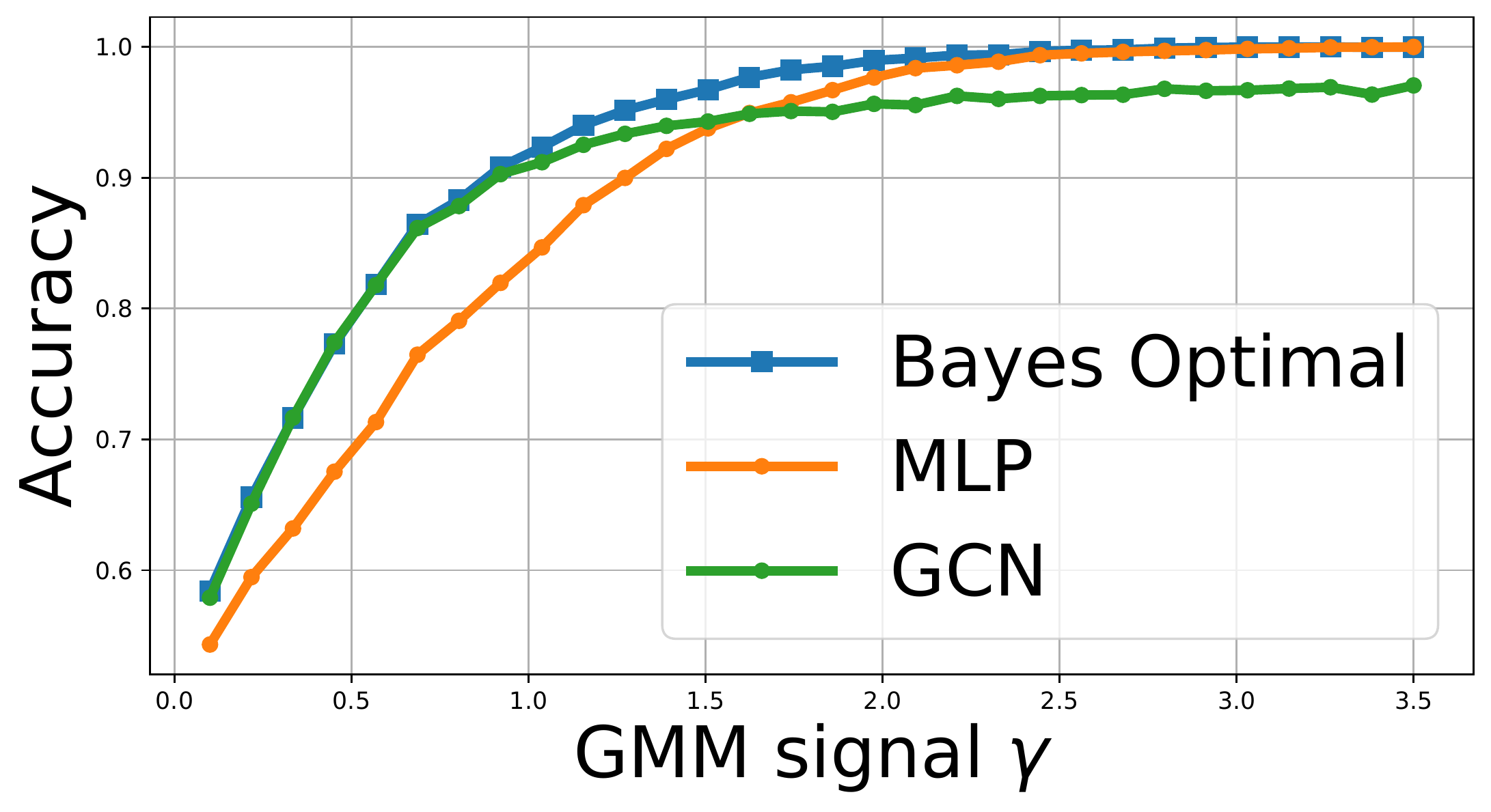}
        \caption{Varying $\gamma$ with fixed $\Gamma=0.42$.}
        \label{fig:fsnr}
    \end{subfigure}
    \begin{subfigure}[b]{0.49\textwidth}
        \includegraphics[width=\linewidth]{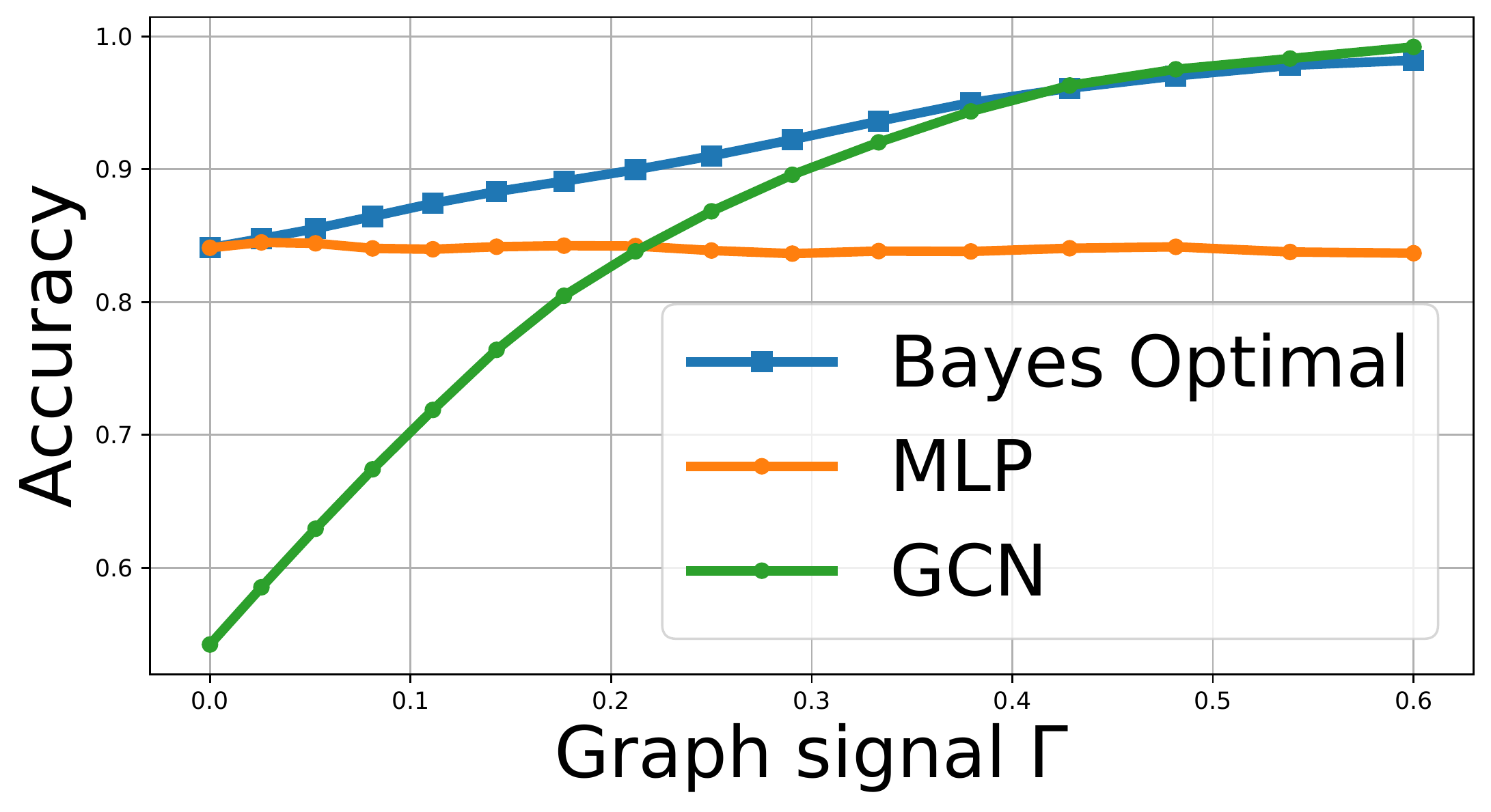}
        \caption{Varying $\Gamma$ with fixed $\gamma=1$.}
        \label{fig:gsnr}
    \end{subfigure}
    \caption{Comparison of \cref{arch} against an MLP and a vanilla GCN \citep{kipf:gcn}.}
    \label{fig:comparison}
\end{figure}

In \cref{fig:comparison}, we show that the accuracy obtained by the optimal classifier is higher than both a simple MLP and a vanilla GCN \citep{kipf:gcn}. We plot the test accuracy of \cref{arch} against the SNR in the node features, $\gamma=\norm{\muv}/\sigma$ in \cref{fig:fsnr}, and against the graph SNR $\Gamma=|a-b|/(a+b)$ in \cref{fig:gsnr}. We fix $\Gamma=0.42$ and $\gamma=1$ for the two plots, respectively. We chose these specific values because they generate relatively clearer plots where the accuracy metrics for the three architectures are easily visible and distinguished from each other. The results are similar for other values for $\Gamma$ and $\gamma$, i.e., the Bayes optimal architecture is superior to both MLP and GCN.

Furthermore, \cref{fig:extreme-Gamma} shows that as claimed in \cref{thm:gen-error-extremes}, when the graph signal is at the extremes, i.e., $\Gamma=0$ and $\Gamma=1$, \cref{arch} behaves like a simple MLP and performs a typical convolution (averaging) over all nodes in the $\ell$-neighbourhood, respectively. In the regime of poor graph SNR, i.e., $\Gamma=0$, a GCN is worse than a simple MLP, as inferred from part three of \cref{thm:gen-error-extremes}.
\begin{figure}[!ht]
    \centering
    \begin{subfigure}[b]{0.49\textwidth}
        \includegraphics[width=\linewidth]{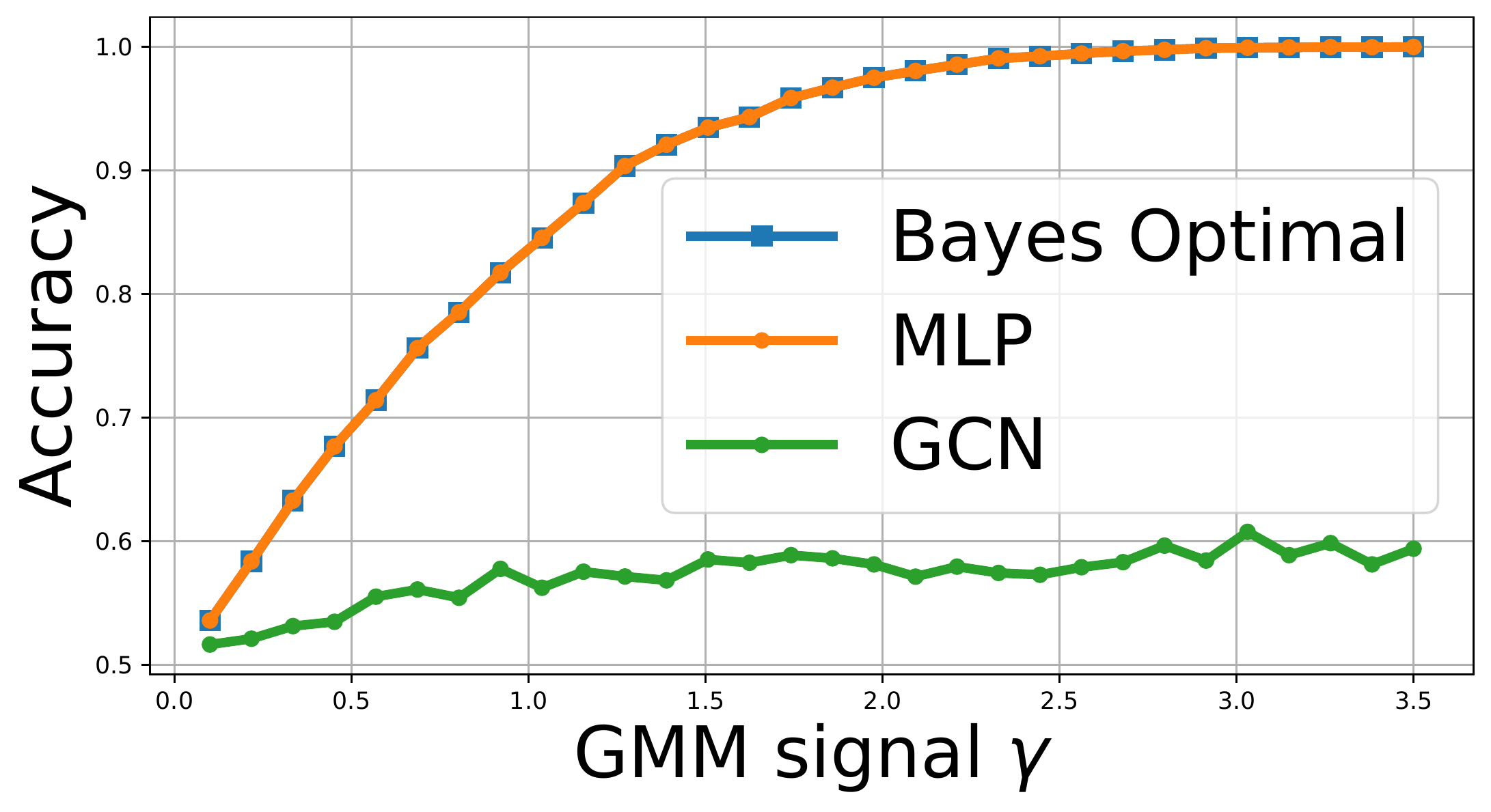}
        \caption{Fixed graph signal $\Gamma=0$.}
        \label{fig:Gamma-0}
    \end{subfigure}
    \begin{subfigure}[b]{0.49\textwidth}
        \includegraphics[width=\linewidth]{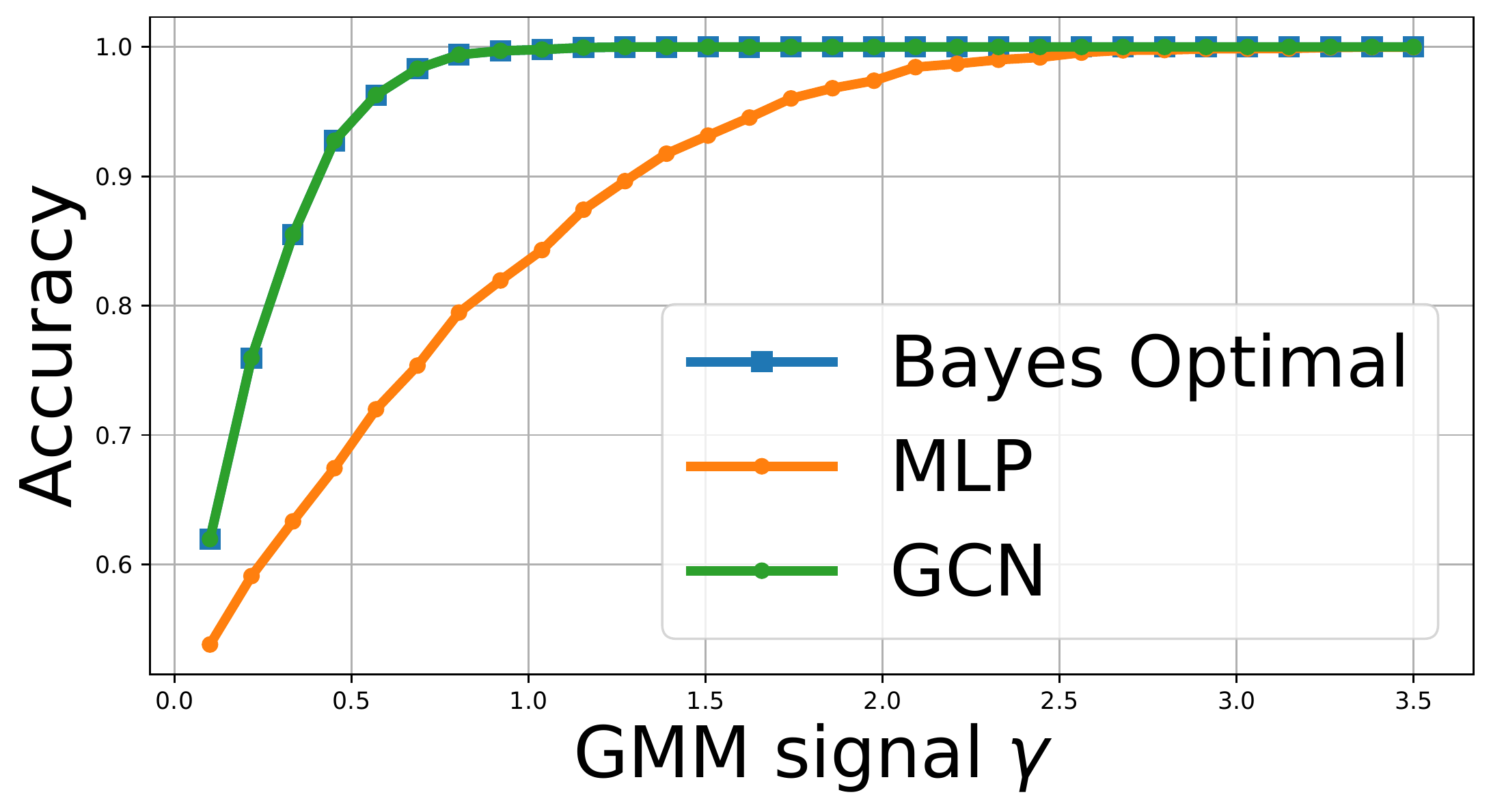}
        \caption{Fixed graph signal $\Gamma=1$.}
        \label{fig:Gamma-1}
    \end{subfigure}
    \caption{Demonstration of \cref{thm:gen-error-extremes} for extreme graph signals. In the case where $\Gamma=0$, the architecture reduces to an MLP (\cref{fig:Gamma-0}), while if $\Gamma=1$, it behaves the same as a GCN (\cref{fig:Gamma-1}).}
    \label{fig:extreme-Gamma}
\end{figure}


\subsection{Non-asymptotic Setting}\label{non-asymp}
We now turn to the non-asymptotic regime and argue that for fixed $n$, the classifier in \cref{cor:opt-msg-passing-binary-csbm} is still in a formal sense, Bayes optimal for an overwhelming fraction of nodes. We begin by exploiting the fact that for up to logarithmic depth neighbourhoods, a sparse CSBM graph is tree-like.
\begin{proposition}[Tree neighbourhoods]\label{prop:cycle-free-neighbourhoods}
Let $G=(V,E)\sim\csbm(n,d,\Pd,\frac an,\frac bn)$ for constants $a,b>1$. Then for any $\ell=c\log n$ such that $c\log((a+b)/2)<1/4$, with probability $1-O(1/\log^2 n)$, the number of nodes $u\in V$ whose $\ell$-neighbourhood is cycle-free is $n(1-o(\log^4(n)/\sqrt{n}))$.
\end{proposition}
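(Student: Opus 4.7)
The plan is to bound the probability that the $\ell$-neighborhood of a fixed vertex contains a cycle, then apply Markov's inequality to the count of vertices whose $\ell$-neighborhood is not tree-like. Since the features are independent of the graph, only the underlying $\sbm(n,\bQ)$ component matters, and each edge is stochastically dominated by a $\Ber(\max(a,b)/n)$ variable.

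First, I would couple the BFS exploration of $\eta_\ell(u)$ from a fixed root $u$ to a two-type Galton-Watson branching process with Poisson offspring whose mean matrix has top eigenvalue $\mu=(a+b)/2$. This gives $\E[|N_k(u)|]\le \mu^k$ and thus $\E[|\eta_\ell(u)|]=O(\mu^\ell)$. Under the hypothesis $c\log\mu<1/4$, we have $\mu^\ell\le n^{1/4}$. A Chernoff-type tail bound applied generation by generation, combined with a union bound across $k\le \ell$, then yields
\begin{align*}
\Prob\bsq{|\eta_\ell(u)| \le \mu^\ell \log n} \ge 1 - O(n^{-c'})
\end{align*}
for some constant $c'>0$.

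Next, I would bound the probability of a cycle in $\eta_\ell(u)$. A cycle corresponds to an ``excess edge'': an edge between two vertices in $\eta_\ell(u)$ that is not a BFS tree edge. Conditional on the discovered set, each candidate pair is present with probability at most $\max(a,b)/n$, and there are at most $\binom{|\eta_\ell(u)|}{2}$ such pairs. Integrating against the event above and using $\mu^{2\ell}\le n^{1/2}$ gives
\begin{align*}
\Prob\bsq{\eta_\ell(u)\text{ contains a cycle}} = O\bpar{\mu^{2\ell}\log^2 n / n} = O(n^{-1/2}\log^2 n).
\end{align*}
Finally, let $X$ count the vertices $u\in V$ whose $\ell$-neighborhood is not tree-like. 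By linearity, $\E[X]=O(n^{1/2}\log^2 n)$, and Markov's inequality with threshold $n^{1/2}\log^4 n$ gives $\Prob[X\ge n^{1/2}\log^4 n]=O(1/\log^2 n)$. On the complementary event, the count of tree-like $\ell$-neighborhoods is at least $n(1-o(\log^4 n/\sqrt n))$, as claimed.

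The main obstacle is the tail bound on $|\eta_\ell(u)|$ in the first step: with $\ell = c\log n$ the population grows polynomially in $n$, and the condition $c\log\mu<1/4$ is tight because the slack $\mu^{2\ell}/n \approx n^{-1/2}$ available in the cycle estimate is only a polynomial improvement. Any exponential loss in the high-probability size estimate would consume this gap, so one must control the multi-type Poisson branching process uniformly across depths $k\le\ell$, e.g., via moment generating functions of the generation sizes. The remaining moment and Markov steps are routine.
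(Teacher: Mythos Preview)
Your proposal is correct. The paper takes a much shorter path: it simply invokes \cref{lem:cycle-free-nodes} (Lemma~4.2 of \citet{massoulie2014community}), which already asserts that for any $m>0$, with probability at least $1-O(1/m^2)$ the number of vertices whose $\ell$-neighbourhood contains a cycle is $O(m\log^3(n)\Delta^{2\ell})$; setting $m=\log n$ and using $\Delta^{2\ell}\le\sqrt n$ (from $c\log\Delta<1/4$) finishes in two lines. Your argument is effectively a from-scratch proof of that cited lemma---the neighbourhood-size control you flag as the main obstacle is exactly \cref{lem:size-of-neighbourhood} (also quoted from Massouli\'e in the paper), and your excess-edge count followed by Markov's inequality reproduces the bound in \cref{lem:cycle-free-nodes}. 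So the two routes agree in substance; yours is self-contained, the paper's is a black-box citation. One technicality worth tightening: the phrase ``conditional on the discovered set, each candidate pair is present with probability at most $\max(a,b)/n$'' is not literally correct, since conditioning on $\eta_\ell(u)=S$ biases the non-tree edges within $S$; the clean way to obtain your $O(\mu^{2\ell}\log^2 n/n)$ bound is to run BFS as a sequential edge-revelation process and bound the expected number of excess edges revealed up to the stopping time when more than $M$ vertices have been discovered, which is indeed routine.
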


In particular, \cref{prop:cycle-free-neighbourhoods} states that for $\ell=c\log n$ for a suitable constant $c$, the $\ell$-neighbourhood of an overwhelming fraction of nodes is a tree.
This implies that the classifier $h^*_\ell$ is Bayes optimal for roughly all of the nodes.
Moreover, since the diameter of a sparse graph (as in our setting) is $O(\log n)$ almost surely \citep[Theorem 6]{chung2001diameter}, any learning mechanism can only look as far as $O(\log n)$-hops away from a node to gather new information. This shows that for such graphs, GNNs that are not very deep and look at only up to logarithmic distance in the neighbourhood are sufficient.

Let us now turn to the misclassification error in the non-asymptotic setting. Recall that for a classifier $h\in\cC_\ell$, we denote by $\cE_n(h)$ and $\cE(h)$ the misclassification error of $h$ on the data model with $n$ nodes, and on the limiting data model with $n\to\infty$, respectively. Furthermore, recall from \cref{cor:opt-msg-passing-binary-csbm} that $\min_{h\in\cC_\ell}\cE(h) = \cE(h^*_\ell)$. Our next result shows that the optimal misclassification error in the non-asymptotic setting across all $\ell$-local classifiers, i.e., $\min_{h\in\cC_\ell}\cE_n(h)$, is close to the misclassification error obtained in the non-asymptotic setting by $h^*_\ell$. Moreover, $\min_{h\in\cC_\ell}\cE_n(h)$ is also close to $\cE(h^*_\ell)$ which is explicitly computed in \cref{thm:generalization-gaussian}.

\begin{theorem}[Misclassification error for fixed $n$]
\label{thm:non-asymp}
For any $1\leq \ell\leq c\log n$ such that the positive constant $c$ satisfies $c\log(\frac{a+b}2)<1/4$, we have that
\[
\begin{aligned}
\abs{\min_{h\in\cC_\ell}\cE_n(h) - \cE_n(h^*_\ell)} &= O\bpar{\frac{1}{\log^2 n}}, &
\abs{\min_{h\in\cC_\ell} \cE_n(h) - \cE(h^*_\ell)} &= O\bpar{\frac{1}{\log^2 n}}.
\end{aligned}
\]
\end{theorem}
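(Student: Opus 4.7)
The plan is to reduce both claims to a single uniform bound: for every classifier $h\in\cC_\ell$,
\[
|\cE_n(h)-\cE(h)| = O(1/\log^2 n),
\]
with the implicit constant independent of $h$. Given this bound, both displayed inequalities follow from a short sandwich. Applying the uniform bound to $h^*_\ell$ gives $\cE_n(h^*_\ell)\le\cE(h^*_\ell)+O(1/\log^2 n)$. Taking the infimum over $h\in\cC_\ell$ and using the asymptotic Bayes optimality of $h^*_\ell$ from \cref{cor:opt-msg-passing-binary-csbm} (i.e.\ $\inf_{h\in\cC_\ell}\cE(h)=\cE(h^*_\ell)$) gives $\min_{h\in\cC_\ell}\cE_n(h)\ge\cE(h^*_\ell)-O(1/\log^2 n)$. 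Combined with the trivial inequality $\min_{h\in\cC_\ell}\cE_n(h)\le\cE_n(h^*_\ell)$, these chain to
\[
\cE(h^*_\ell)-O(1/\log^2 n)\le\min_{h\in\cC_\ell}\cE_n(h)\le\cE_n(h^*_\ell)\le\cE(h^*_\ell)+O(1/\log^2 n),
\]
which delivers both bounds in the theorem.

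For the uniform closeness bound, I decompose according to the event $T_n$ that $\eta_\ell(u_n)$ is a tree. \cref{prop:cycle-free-neighbourhoods} combined with the tower rule gives $\Prob(T_n^c)=O(1/\log^2 n)$: the bad-graph event has probability $O(1/\log^2 n)$, and conditional on the graph being good only an $o(\log^4 n/\sqrt n)$ fraction of vertices have cycle-containing $\ell$-neighbourhoods. The contribution of $T_n^c$ to $|\cE_n(h)-\cE(h)|$ is at most $\Prob(T_n^c)=O(1/\log^2 n)$ for any $h\in\cC_\ell$. On $T_n$, I couple the $\ell$-neighbourhood of $u_n$ in the finite CSBM to the first $\ell$ generations of the feature-decorated Poisson Galton--Watson tree described around \eqref{eq:alpha-beta-k}. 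The only discrepancies between the two exploration processes are the Binomial-versus-Poisson offspring distribution at each vertex and the depletion of the unexplored class populations as vertices are revealed. The hypothesis $c\log((a+b)/2)<1/4$ ensures $|\eta_\ell(u_n)|\le (a+b)^\ell/2<n^{1/4}$, so each offspring step contributes $O(1/n)$ from the Binomial-to-Poisson approximation and $O(1/n^{3/4})$ from pool depletion, for a total coupling cost $O(n^{-1/2})=o(1/\log^2 n)$. Since every $h\in\cC_\ell$ is measurable with respect to $(u,\{\bX_v\}_{v\in\eta_\ell(u)},\{d(u,v)\}_{v\in\eta_\ell(u)})$, its misclassification probability in the two models differs by at most this total variation distance, yielding the uniform bound.

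The main obstacle is carrying out this coupling cleanly: the CSBM neighbourhood is revealed by a correlated exploration in which the unexplored class populations shrink as vertices are discovered, and the joint law of the tree structure, the vertex class labels, and the conditionally i.i.d.\ features must be matched through this exploration to the two-type Poisson Galton--Watson offspring distribution with features overlaid. The depth budget $c\log((a+b)/2)<1/4$ is exactly what keeps $|\eta_\ell(u_n)|\ll\sqrt n$, and hence what forces the Binomial-to-Poisson and depletion errors to be $o(1/\log^2 n)$; a weaker depth assumption would require a more refined coupling argument, and a stronger regularity hypothesis on the feature densities would be needed to upgrade the misclassification comparison from a total-variation bound to pointwise control.
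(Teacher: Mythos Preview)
Your overall strategy is sound: establishing a uniform bound $\sup_{h\in\cC_\ell}|\cE_n(h)-\cE(h)|=O(1/\log^2 n)$ and then sandwiching is a clean route to both inequalities. The paper organizes the argument somewhat differently. It first applies a total-variation bound between the finite-$n$ level counts $\{U_k^{\pm}\}_{k\le\ell}$ and the Poisson variables $\{\alpha_k,\beta_k\}_{k\le\ell}$ (obtaining $O(\log^3 n/n^{1/4})$ via a Stein--Chen step, \cref{lem:total-var}) only to the specific classifier $h^*_\ell$, yielding $|\cE_n(h^*_\ell)-\cE(h^*_\ell)|=o(1/\log^2 n)$. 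For the remaining gap $|\min_{h}\cE_n(h)-\cE_n(h^*_\ell)|$ it does not invoke a uniform coupling; instead it observes that conditional on the tree event $E_u$ the classifier $h^*_\ell$ is already the Bayes optimal $\ell$-local classifier for the \emph{finite} model (its MAP derivation only uses that the neighbourhood is a tree), so this gap is bounded directly by $\Prob(E_u^{\setc})=O(1/\log^2 n)$ from \cref{prop:cycle-free-neighbourhoods}. Your route avoids appealing to the conditional finite-$n$ optimality of $h^*_\ell$; the paper's route avoids uniformity over $\cC_\ell$ but needs that extra structural observation.

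Two points in your coupling sketch need correcting, though neither breaks the conclusion. First, the claim that $|\eta_\ell(u_n)|\le ((a+b)/2)^\ell<n^{1/4}$ is not a deterministic consequence of the depth hypothesis; the neighbourhood size is random and one only has $S_k(u)\le C\Delta^k\log n$ with high probability (\cref{lem:size-of-neighbourhood}), so the working bound is $\lesssim n^{1/4}\log n$ on a good event. Second, your list of discrepancies between the two explorations omits the fluctuation $|n_{\pm}-n/2|=O(\sqrt{n}\log n)$ of the empirical class sizes: the limiting Poisson rates use $n/2$ while the finite Binomials use $n_{\pm}$. This term contributes $O((\log n/\sqrt{n})\cdot S_{k-1})=O(\log^2 n/n^{1/4})$ per level and dominates your depletion and Binomial-to-Poisson errors, so the total coupling cost is $O(\log^3 n/n^{1/4})$ rather than $O(n^{-1/2})$. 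Both corrections are still $o(1/\log^2 n)$, so your sandwich goes through once they are stated correctly.
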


Recall that \cref{cor:opt-msg-passing-binary-csbm} implies that $h^*_\ell$ performs optimally on the limiting data model (asymptotic setting) among the class of $\ell$-local classifiers $\cC_\ell$, but it may not be optimal for the non-asymptotic data model where we have a finite feature-decorated graph with $n$ nodes. However, \cref{thm:non-asymp} helps us conclude that even in the non-asymptotic setting, $h^*_\ell$ performs almost as well as the actual optimal classifier among $\cC_\ell$ in this case, as long as we compare with classifiers that can only look at moderate logarithmic depths in the local neighbourhood, i.e., $\ell\le c\log n$ for a suitable $c$.

\section{Proofs}
\subsection{Preliminary Results}
In this section, we state two important preliminary results and a fact that are used to establish our results in the paper.
\begin{lemma}\label{lem:m-more-edges}
\citep[Lemma 3]{dreier2018local}.
Let $G\sim\csbm(n,d,\{\Pd_{-},\Pd_{+}\},\frac an,\frac bn)$ and $r$ be a fixed constant. Then the probability that there exists an $r$-neighbourhood in $G$ with $m$ more edges than the number of vertices is bounded as follows:
\[
\Prob(\exists\; G_r\subset G \text{ s.t. } |E(G_r)| \ge |V(G_r)|+m) \le \frac{(2(r(m+1))^2(a+b))^{2r(m+1)+m}}{n^m}.
\]
\end{lemma}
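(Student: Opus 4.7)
The plan is a first-moment union bound over small ``witness'' subgraphs. I would reduce the event that some $r$-neighbourhood of $G$ has $m$ more edges than vertices to the existence of a connected subgraph $H \subseteq G$ on at most $r(m+1)$ vertices with $|E(H)| \ge |V(H)| + m$, then count such subgraphs and bound the probability that each one is present in $G$.

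For the reduction, I would argue as follows. If $\eta_r(u)$ has $|E| \ge |V| + m$, then its cyclomatic number is at least $m+1$, so it contains $m+1$ linearly independent cycles. A convenient basis is the fundamental cycle basis relative to a BFS tree rooted at $u$: each fundamental cycle consists of two tree paths from $u$ (each of length $\le r$) together with a single non-tree edge, so it has at most $2r$ vertices. Taking the union of $m+1$ such fundamental cycles yields a connected subgraph $H \subseteq \eta_r(u)$ with $|V(H)| \le r(m+1)$ (after accounting for shared root paths) and $|E(H)| \ge |V(H)| + m$ by construction, so the event in the lemma implies the existence of such an $H$.

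For the probability bound, I would use that conditional on the community labels, every potential edge of $G$ is independently present with probability at most $\max(a,b)/n \le (a+b)/n$, so for any fixed edge set $F$, $\Prob[F \subseteq E(G)] \le ((a+b)/n)^{|F|}$ uniformly over label assignments and hence unconditionally. Summing over witnesses with $k$ vertices and $k + m$ edges: the number of vertex sets is at most $n^k$, and the number of edge patterns on a fixed vertex set is at most $\binom{\binom{k}{2}}{k+m} \le k^{2(k+m)}$. The expected number of witnesses is therefore at most $\sum_{k \le r(m+1)} n^k \cdot k^{2(k+m)} \cdot ((a+b)/n)^{k+m} = \sum_k (k^2(a+b))^{k+m}/n^m$. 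The dominant term sits at $k = r(m+1)$, giving exponent at most $2r(m+1) + m$ and base $(r(m+1))^2(a+b)$; absorbing the sum over $k$ (and lower-order contributions) into an extra factor of $2$ inside the base yields the stated bound, and Markov's inequality concludes.

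The main obstacle will be the reduction step: producing a witness subgraph whose vertex count is bounded by $r(m+1)$ while preserving the full $m$-excess. If the chosen cycles overlap heavily the vertex bound is comfortable, but if they are nearly disjoint one must be careful that the connecting spanning tree through $u$ does not consume too much of the $r(m+1)$ budget, and that the cycles really do contribute independently to the edge count rather than cancelling each other via shared non-tree edges. The argument of \citet{dreier2018local} presumably packages this via a ``minimal violator'' construction; once the vertex bound is nailed down, matching the exact constants in the final expression is routine bookkeeping.
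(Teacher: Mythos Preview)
The paper does not prove this lemma at all: it is quoted as a preliminary result from \citet[Lemma~3]{dreier2018local} and used as a black box. So there is no ``paper's own proof'' to compare your proposal against; what you have written is a reconstruction of the argument behind the cited reference.

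Your overall strategy --- reduce to a small connected witness with excess $m$ and then do a first-moment union bound over labelled subgraphs --- is the standard one and is the right approach. There is, however, a concrete slip in your reduction and in the arithmetic that follows. Each fundamental cycle in the BFS tree rooted at $u$ can have up to about $2r$ vertices (two root-paths of length $\le r$ joined by a non-tree edge), so the union of $m+1$ such cycles, together with the connecting tree paths through $u$, has at most roughly $2r(m+1)$ vertices, not $r(m+1)$. Your parenthetical ``after accounting for shared root paths'' does not in general buy you a factor of two: if the $m+1$ non-tree edges sit in different subtrees below $u$, the root-paths are essentially disjoint and the vertex count really is of order $2r(m+1)$. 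This matters because the exponent $2r(m+1)+m$ in the stated bound is exactly $k+m$ with $k=2r(m+1)$; your own final sentence (``$k=r(m+1)$, giving exponent at most $2r(m+1)+m$'') is internally inconsistent and only comes out right if you use $k\le 2r(m+1)$ from the start. Once you correct the witness size to $2r(m+1)$ vertices, the counting step you wrote goes through cleanly: $n^k$ vertex choices, at most $\binom{\binom{k}{2}}{k+m}\le (k^2/2)^{k+m}$ edge patterns, edge probability $\le ((a+b)/n)^{k+m}$, and the claimed bound drops out.
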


\begin{lemma}\label{lem:cycle-free-nodes}
\citep[Lemma 4.2]{massoulie2014community}.
Assume $\ell=c\log(n)$ with $c\log\Delta < 1/4$, where $\Delta=(a+b)/2$.
Then with high probability, no node $i$ has more than one cycle in its $\ell$-neighbourhood.
Moreover, for any $m>0$, with probability at least $1-O(1/m^2)$ the number of nodes $i$ whose $\ell$-neighbourhood contains at least one cycle is bounded by $O(m\log^3(n)\Delta^{2\ell})$.
\end{lemma}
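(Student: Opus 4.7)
The plan is to decompose the contributions to $\cE_n(h)$ according to whether the $\ell$-neighbourhood of the root is cycle-free, invoke \cref{prop:cycle-free-neighbourhoods} to show that the non-tree contribution is of size $O(1/\log^2 n)$ uniformly in $h$, and use the tree factorisation of the posterior on the cycle-free event to identify $h^*_\ell$ as the Bayes optimal $\ell$-local classifier on that event.

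Concretely, let $T(u) = \{\eta_\ell(u) \text{ is cycle-free}\}$; this event is measurable with respect to the $\ell$-local view at $u$, so every $h \in \cC_\ell$ may adapt its output to whether $T(u)$ holds. Writing $\cE_n(h)$ as the vertex-average of $\E[\indic_{T(u)}\,\Prob(h_u \ne y_u \mid \cF_\ell(u))] + \E[\indic_{T(u)^c}\,\Prob(h_u \ne y_u \mid \cF_\ell(u))]$ where $\cF_\ell(u)$ denotes the $\ell$-local data at $u$, \cref{prop:cycle-free-neighbourhoods} together with the trivial $\Prob(h_u\ne y_u)\le 1$ bounds the contribution from $T(u)^c$ by $O(1/\log^2 n)$ uniformly in $h$. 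On $T(u)$, the joint law of labels and features inside $\eta_\ell(u)$ factorises exactly as on a labelled tree rooted at $u$: conditional on the tree's shape, labels propagate independently through the $q_{ij}$ Bernoulli kernel, and features are conditionally independent given labels. Repeating the derivation of \cref{thm:opt-msg-passing} on this finite rooted tree identifies $h^*_\ell$ as the MAP estimator of $y_u$ given $\cF_\ell(u)$, so $\Prob(h_u \ne y_u \mid \cF_\ell(u)) \ge \Prob(h^*_{\ell,u} \ne y_u \mid \cF_\ell(u))$ on $T(u)$ for every $h \in \cC_\ell$. Averaging and combining with the bound on $T(u)^c$ yields $\min_{h\in\cC_\ell}\cE_n(h) \ge \cE_n(h^*_\ell) - O(1/\log^2 n)$, and the trivial inequality $\min_{h\in\cC_\ell}\cE_n(h) \le \cE_n(h^*_\ell)$ closes the first claim.

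For the second claim, by the triangle inequality it suffices to complement the first bound with $|\cE_n(h^*_\ell) - \cE(h^*_\ell)| = O(1/\log^2 n)$. Here I would couple, on the tree event $T(u)$, the labelled neighbourhood of $u_n$ in the CSBM with the depth-$\ell$ truncation of the feature-decorated Poisson Galton-Watson tree $(G,u)$ arising as its local weak limit. The coupling error per vertex is $O(1/n)$ from the Binomial$(n_\pm, b_{ij}/n)$-to-Poisson$(b_{ij}/2)$ step, and iterating generation by generation gives total variation between the two labelled rooted trees of order $O\bigl(((a+b)/2)^\ell / n\bigr) = O(n^{c\log((a+b)/2)-1}) = O(n^{-3/4})$, which is $o(1/\log^2 n)$. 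Under this coupling $\Prob(h^*_{\ell,u} \ne y_u \mid T(u))$ agrees with $\cE(h^*_\ell)$ up to this error, and the non-tree contribution is again $O(1/\log^2 n)$ by \cref{prop:cycle-free-neighbourhoods}. The main technical obstacle is the bookkeeping for this tree coupling: one must ensure that the generation-by-generation Binomial-to-Poisson errors, summed over a random tree with (random) offspring counts, stay within the budget $1/\log^2 n$. The hypothesis $c\log((a+b)/2) < 1/4$ is exactly what keeps the expected tree size $o(\sqrt n)$ and thus pins the coupling error well below the rate in \cref{prop:cycle-free-neighbourhoods}, which ultimately dictates the $O(1/\log^2 n)$ bound.
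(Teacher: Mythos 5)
Your proposal does not prove the stated lemma. The statement you were asked to prove is \cref{lem:cycle-free-nodes}, a purely structural/probabilistic fact about the sparse random graph: with high probability no node has more than one cycle in its $\ell$-neighbourhood, and, with probability at least $1-O(1/m^2)$, the number of nodes whose $\ell$-neighbourhood contains a cycle is $O(m\log^3(n)\Delta^{2\ell})$. Your argument instead proves something about misclassification errors of $\ell$-local classifiers on the CSBM; it is, in fact, essentially a proof of \cref{thm:non-asymp}, not of \cref{lem:cycle-free-nodes}. Nothing in your write-up touches the combinatorics of cycles, the distribution of the number of excess edges in a radius-$\ell$ ball, or the Markov/Chebyshev-type argument needed to convert a moment bound into the advertised $1-O(1/m^2)$ probability guarantee. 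Indeed, you \emph{cite} \cref{prop:cycle-free-neighbourhoods}, which itself depends on \cref{lem:cycle-free-nodes}, so the argument is circular as a proof of this lemma.

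To actually prove \cref{lem:cycle-free-nodes} you would need a first- and second-moment argument on the number of ``tangled'' neighbourhoods. Roughly: the event that node $i$ has at least two cycles in $\eta_\ell(i)$ forces a subgraph of radius at most $\ell$ with at least two more edges than vertices; applying a bound in the spirit of \cref{lem:m-more-edges} (but uniformly over radii up to $c\log n$, and with $m=2$) together with the constraint $\Delta^{4\ell} = o(n)$ from $c\log\Delta<1/4$ gives the ``no two cycles'' claim with high probability. For the second claim, one computes the expectation of the number $N_{\rm cyc}$ of nodes whose $\ell$-ball contains at least one cycle (roughly: the expected size of the $\ell$-ball times the probability that a ball of that size contains an excess edge, summed over nodes, which is of order $\log^3(n)\Delta^{2\ell}$ by the size bound of \cref{lem:size-of-neighbourhood}) and controls its variance, so that Chebyshev yields $\Prob(N_{\rm cyc} > m\,\E N_{\rm cyc}) = O(1/m^2)$. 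The paper does not reproduce this argument; it simply cites Massouli\'e. In short, your submission is a (reasonable-looking) proof sketch of a different theorem and says nothing about the lemma at hand.
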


\begin{fact}\label{fact:1}
For any non-negative $u,v,x,y$ such that $x,y\le 1$,
\[
xu + yv - \frac12\bpar{xu + yv}^2 \le 1 - \bpar{1-x}^u\bpar{1-y}^v \le xu + yv.
\]
\end{fact}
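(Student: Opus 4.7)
The plan is to prove the two inequalities separately, letting $s=xu+yv$ for brevity, so that the target becomes $s - s^2/2 \le 1-(1-x)^u(1-y)^v \le s$.

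For the upper bound, I would first dispose of the trivial range: if $s\ge 1$ then $1-(1-x)^u(1-y)^v \le 1 \le s$, and we are done. Otherwise $xu\le s<1$ and $yv\le s<1$, so Bernoulli's inequality gives $(1-x)^u \ge 1-xu \ge 0$ and $(1-y)^v\ge 1-yv\ge 0$ (with non-negative integer exponents, matching the combinatorial usage of this fact in the proof of \cref{prop:cycle-free-neighbourhoods}). Multiplying the two non-negative factors yields
\[
(1-x)^u(1-y)^v \;\ge\; (1-xu)(1-yv) \;=\; 1-xu-yv+xyuv \;\ge\; 1-s,
\]
which rearranges to the desired upper bound.

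For the lower bound, I would again handle the easy range first: if $s\ge 2$, then $s-s^2/2 = s(1-s/2)\le 0 \le 1-(1-x)^u(1-y)^v$. For $s<2$, I would sandwich through the exponential. Using the standard inequality $1-z\le e^{-z}$ valid for all $z\ge 0$, we get $(1-x)^u(1-y)^v \le e^{-xu}e^{-yv} = e^{-s}$, so it suffices to prove the one-variable inequality $e^{-t}\le 1-t+t^2/2$ for $t\ge 0$. This follows by setting $f(t)=1-t+t^2/2-e^{-t}$, noting $f(0)=0$, $f'(0)=0$, and $f''(t)=1-e^{-t}\ge 0$ for $t\ge 0$, so $f'$ is non-decreasing and hence $\ge 0$, which makes $f$ non-decreasing and $\ge 0$. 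Combining gives $(1-x)^u(1-y)^v \le e^{-s} \le 1-s+s^2/2$, which rearranges to the claim.

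The only real subtlety is the direction of Bernoulli's inequality in the upper bound: for real exponents strictly between $0$ and $1$ the inequality reverses, so the upper bound as written requires $u,v$ to be non-negative integers (or $\ge 1$), which is how the fact is applied in \cref{prop:cycle-free-neighbourhoods}. Beyond that, both bounds reduce to elementary one-variable calculus facts about $e^{-t}$ and $1-t$, and the case split on whether $s$ is small, moderate, or large handles the regimes where the right-hand sides become uninformative.
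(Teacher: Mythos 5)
The paper states \cref{fact:1} without proof, so there is no in-paper argument to compare against; your proof must stand on its own, and it does. Both halves are correct. For the upper bound you use Bernoulli's inequality $(1-x)^u\ge 1-xu$ and $(1-y)^v\ge 1-yv$, multiply, and discard the cross term $xyuv\ge 0$; the preliminary case $s=xu+yv\ge 1$ ensures the two Bernoulli lower bounds are nonnegative before you multiply, which is exactly the right precaution. For the lower bound the chain $(1-x)^u(1-y)^v\le e^{-xu-yv}=e^{-s}\le 1-s+s^2/2$ is clean, and in fact the $s\ge 2$ case you split off is superfluous since $e^{-t}\le 1-t+t^2/2$ already holds for all $t\ge 0$. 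Your caveat about the exponents is the most valuable observation here: the upper bound as literally stated (``for any non-negative $u,v$'') is false for fractional exponents in $(0,1)$; for example, $u=1/2$, $v=0$, $x=1/2$, $y=0$ gives $1-(1/2)^{1/2}\approx 0.293 > 0.25 = xu+yv$, because Bernoulli reverses there. In the paper the fact is invoked only with $u,v$ equal to the integer-valued vertex counts $U_{k-1}^{\pm}$ inside the proof of \cref{lem:total-var}, so the intended hypothesis is that $u,v$ are non-negative integers (or $\ge 1$), exactly as your proof requires; the lower bound, by contrast, needs no such restriction.
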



\subsection{Bayes Optimal Classifier}
In this section, we compute the asymptotically $\ell$-locally Bayes optimal classifier for the general CSBM described in \cref{data-model} and establish \cref{thm:opt-msg-passing}, followed by a proof of \cref{cor:opt-msg-passing-binary-csbm}. Next, we compute the generalization error for the two-class case with arbitrary node features.

\subsubsection{Computing the Classifier}
For the proofs, we introduce the notation $N_k(u)$ for a given graph to mean the set of vertices at a distance of exactly $k$ from node $u$ in the graph. Thus, $\eta_k(u) = \{u\} \cup_{j=1}^k N_k(u)$.
\begin{theorem*}[Restatement of \cref{thm:opt-msg-passing}]
For any $\ell\geq 1$, the asymptotically $\ell$-locally Bayes optimal classifier of the root for the sequence $(G_n,u_n)\sim \csbm(n,d,\Pd,\bQ)$ is
\[
h^*_\ell(u, \{\bX_v\}_{v\in\eta_\ell(u)}, \{d(u,v)\}_{v\in\eta_\ell(u)}) = \argmax_{y_u\in[C]}\; \sum_{v\in \eta_\ell(u)}\log\inner{\brho(\bX_v),\bQ^{d(u,v)}_{y_u}},
\]
where $\{\rho_i\}_{i\in[C]}$ are the densities associated with the distributions $\Pd_i\in \Pd$, and $\brho = (\rho_i)_{i\in[C]}$.
Furthermore, there exists a choice of parameters for \cref{arch} such that it realizes $h^*_\ell$.
\end{theorem*}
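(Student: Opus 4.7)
The plan is to exploit the fact that the local weak limit of the sequence $(G_n, u_n) \sim \csbm(n, d, \Pd, \bB/n)$ is a feature-decorated multi-type Poisson Galton-Watson tree, so that the Bayes optimality of $h^*_\ell$ can be established by a direct calculation on this limit. First, by the standard coupling argument for sparse SBM (see, e.g., \citet{mossel2015reconstruction}, extended by the conditional independence of features given labels), $(G_n, u_n)$ converges in the local weak sense to a rooted tree $(\mathcal{T}, u)$, where $\mathcal{T}$ is a multi-type Poisson Galton-Watson tree in which a node of class $i$ has $\Poi(b_{ij}/C)$ independent class-$j$ children and each class-$j$ node carries a feature drawn independently from $\rho_j$. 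Because the input to any $\ell$-local classifier depends only (continuously) on the decorated $\ell$-neighbourhood, local weak convergence gives that the misclassification probability of any $h \in \cC_\ell$ on $(G_n, u_n)$ converges to its misclassification probability on $(\mathcal{T}, u)$, so it suffices to identify the minimizer of the latter over $\cC_\ell$.

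Next, with the uniform prior on $y_u$, this minimizer equals $\argmax_{i \in [C]} P(\mathrm{obs} \mid y_u = i)$, where $\mathrm{obs} = (\{\bX_v\}, \{d(u,v)\})_{v \in \eta_\ell(u)}$. The Poisson structure of $\mathcal{T}$ makes this computable: given $y_u = i$, the marginal intensity of the feature-valued point process at depth $k$ (after marginalizing the hidden class labels along a length-$k$ path) is proportional to $\sum_j \bQ^k_{ij}\rho_j(x) = \inner{\brho(x), \bQ^k_i}$, since $\bQ^k_{ij}$ is proportional to the expected number of class-$j$ descendants at depth $k$ of a class-$i$ root and, conditional on the class, the feature density is $\rho_j$. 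Summing the associated Poisson log-likelihoods over $k \in \{0, 1, \ldots, \ell\}$ and over the observed nodes $v$ at each depth, and discarding additive terms that do not depend on $i$, yields exactly the stated maximizer.

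For the realizability claim, the strategy is an explicit choice of parameters for \cref{arch}. Pick the MLP $(\bW^{(l)}, \bv^{(l)}, \sigma_l)_{l \in [L]}$ so that $\bH^{(L)}_v$ encodes $\brho(\bX_v)$ to any desired precision (possible by the universal approximation theorem for standard feedforward networks, \citet{zhou2017expressive} as cited in the paper), and take $\bZ$ with $\sigmoid(\bZ) = \bQ$. Then by construction $\bM^{(k)}_{v, i} = \log\inner{\brho(\bX_v), \bQ^k_i}$; by the pre-processing defining $\bAt^{(k)}$, the aggregation $\bAt^{(k)}_{u,:}\bM^{(k)}_{:,i}$ equals $\sum_{v : d(u,v)=k} \log\inner{\brho(\bX_v), \bQ^k_i}$; so the final $\argmax$ over $i$ reproduces $h^*_\ell$.

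The main obstacle I anticipate is the second step: rigorously justifying the Poisson log-likelihood form at depths $\geq 2$. At depth $1$ the children of the root form a genuine marked Poisson point process, so the likelihood admits the clean form $e^{-\sum_j b_{ij}/C}\prod_v \inner{\brho(\bX_v), \bQ_i}$ (up to normalizations independent of $i$); at depth $k \geq 2$, however, the feature point process is a Cox / compound-Poisson process whose intensity fluctuates with the hidden classes of intermediate ancestors. Since the $\ell$-local classifier only sees the (unordered) multiset of (feature, distance) pairs in $\eta_\ell(u)$, the careful argument will need to show that the higher-order correlations among siblings and cousins contribute only additive constants in $i$ after one integrates out the hidden ancestor labels, so that the Bayes optimum over $\cC_\ell$ truly depends on the data only through $\sum_v \log\inner{\brho(\bX_v), \bQ^{d(u,v)}_i}$.
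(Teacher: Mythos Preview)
Your high-level route is the same as the paper's: pass to the local weak limit (a feature-decorated multi-type Galton--Watson tree) and then compute the MAP estimate of $y_u$ from the $\ell$-local observation. The realizability paragraph also matches the paper's argument essentially verbatim (MLP realising $\brho$ via universal approximation, $\sigmoid(\bZ)=\bQ$, and the $\bAt^{(k)}$ aggregation reproducing the depth-$k$ shell sums).

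The divergence is precisely at the step you flag as your main obstacle, and it is worth knowing that the paper does \emph{not} resolve it --- it sidesteps it. The paper simply \emph{posits} the joint likelihood of the $\ell$-neighbourhood in the already-factored form
\[
f_{\bY} \;=\; \bP_{\{y_v\}}\prod_{v\in \eta_\ell(u)}\rho_{y_v}(\bX_v)\,\bQ^{d(u,v)}_{y_u y_v},
\]
i.e.\ each node $v$ is treated as if its label were connected to the root through its own independent length-$d(u,v)$ Markov chain with transition matrix $\bQ$, with no coupling through shared ancestors. Once that product form is granted, the marginalization over $\{y_v\}_{v\neq u}$ is pure distributivity (product of sums equals sum of products), which the paper records as a short induction lemma and which immediately yields $\prod_v \inner{\brho(\bX_v),\bQ^{d(u,v)}_{y_u}}$.

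So the ``careful argument'' you anticipate --- showing that the sibling/cousin correlations induced by hidden intermediate labels contribute only $y_u$-independent terms to the $\cC_\ell$-likelihood --- is not supplied in the paper either; the factored likelihood is asserted as the starting point rather than derived from the branching structure. Your proposal is therefore already at least as careful as the paper's own proof, and your identification of the gap is accurate. If your goal is to match the paper, write down the factored $f_{\bY}$ as the premise and run the one-line marginalization; if your goal is a genuinely rigorous derivation, the obstacle you isolated is exactly the piece that would need to be filled in.
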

\begin{proof}
We begin by writing the MAP estimation for this problem. Note that the features $\bX_v\in\R^d$ for every node $v\in[n]$ follow the law $\Pd_{i}$ if $y_v=i\in[C]$. In addition, recall that we have the edge-probability matrix $\bQ=\{q_{ij}\}_{i,j\in[C]}$ with
$\Prob((u,v) \text{ is an edge}\mid y_u=i, y_v=j) = q_{ij}$,
where $q_{ij}=b_{ij}/n$ for absolute constants $b_{ij}$. Then we can write the likelihood of the $\ell$-neighbourhood of node $u$ as the joint function:
\begin{equation}\label{eq:neighbourhood-likelihood}
f_{\bY}(u, \eta_\ell(u), \{\bX_v\}_{v\in\eta_\ell(u)}, \{d(u,v)\}_{v\in\eta_\ell(u)}, \bY) = \bP_{\{y_v\}_{v\in\eta_\ell(u)}}\prod_{v\in \eta_\ell(u)}\rho_{y_v}(\bX_v)\bQ^{d(u,v)}_{y_uy_v}.
\end{equation}
In the above, $\bP_{\{y_v\}_{v\in\eta_\ell(u)}}$ denotes the prior distribution of the node labels, which by our assumption is uniform; $\bQ^k_{ij}$ denotes the $i,j$-th entry of the matrix $\bQ^k$ and quantifies the probability that a node in class $j$ is at a distance $k$ from a node in class $i$, and $d(u,v)$ denotes the distance between nodes $u$ and $v$. Let us now compute the MAP estimator by marginalizing over the labels of all nodes $v \in \eta_\ell(u)\setminus \{u\}$, i.e., $f_{y_u}(u, \eta_\ell(u), \{\bX_v\}_{v\in\eta_\ell(u)}, \{d(u,v)\}_{v\in\eta_\ell(u)}, y_u) = \int f_{\bY}\,d\bY_{-u}$, where $\bY_{-u}$ denotes the set of labels other than that of node $u$.

Let us recall and introduce some notation for the model with $C$ classes. Let the neighbourhood be $\eta_\ell(u) = \{u, v_i, \ldots, v_L\}$, where $L = |\eta_\ell(u)|-1$, the number of nodes in the $\ell$-neighbourhood of $u$ other than $u$ itself. Let $V_k = \{v_1, \ldots, v_k\}$ for all $1\le k\le L$ and define
\begin{equation}\label{eq:tk}
T_k = \sum_{\{y_{v_1},\ldots,y_{v_k}\}\in [C]^k}\bsq{\prod_{v\in V_k}\rho_{y_v}(\bX_v)\bQ^{d(u,v)}_{y_uy_v}}.
\end{equation}

We now provide a result to help us compute the maximization of the marginalized likelihood.
\begin{lemma}\label{lem:alignment}
Let $i=y_u$ and $T_k$ be defined as in \cref{eq:tk}. Then we have $T_k = \prod_{v\in V_k}\inner{\brho(\bX_v), \bQ^{d(u,v)}_i}$, where $\brho(\xv)\in\R^C$ is the vector of density function outputs $\{\rho_j(\xv)\}_{j\in [C]}$ and $\bQ^{d(u,v)}_{i}$ denotes the $i^{\rm th}$ row vector of the matrix $\bQ^{d(u,v)}$.
\end{lemma}
\begin{proof}
We will prove this by induction on $k$.
Let $i=y_u$ and $j=y_{v_1}$. Then note that for the base case, we have
$T_1 = \sum_{j\in[C]}\rho_{j}(\bX_{v_1})\bQ^{d(u,v_1)}_{ij} = \inner{\brho(\bX_{v_1}), \bQ^{d(u,v_1)}_{i}}$. For the induction hypothesis, assume that $T_l = \prod_{v\in V_l}\inner{\brho(\bX_v), \bQ^{d(u,v)}_i}$ for all $l\le k-1$. Then we have
\begin{align*}
    T_k
    &= \sum_{\{y_{v_1},\ldots,y_{v_k}\}\in [C]^k}\bsq{\prod_{v\in V_k}\rho_{y_v}(\bX_v)\bQ^{d(u,v)}_{y_uy_v}}\\
    &= \sum_{y_{v_k}\in[C]}\bsq{\rho_{y_{v_k}}(\bX_{v_k})\bQ^{d(u,v_k)}_{y_uy_{v_k}}\bpar{\sum_{\{y_{v_1},\ldots,y_{v_{k-1}}\}\in [C]^{k-1}}\bpar{\prod_{v\in V_k}\rho_{y_v}(\bX_v)\bQ^{d(u,v)}_{y_uy_v}}}}\\
    &= \sum_{y_{v_k}\in[C]}\rho_{y_{v_k}}(\bX_{v_k})\bQ^{d(u,v_k)}_{y_uy_{v_k}}T_{k-1} \qquad\qquad\qquad\qquad\quad\;\;\, (\text{using \cref{eq:tk}})\\
    &= \sum_{y_{v_k}\in[C]}\rho_{y_{v_k}}(\bX_{v_k})\bQ^{d(u,v_k)}_{y_uy_{v_k}}\prod_{v\in V_{k-1}}\inner{\brho(\bX_v), \bQ^{d(u,v)}_i} \qquad (\text{using I.H. to replace } T_{k-1})\\
    &= \prod_{v\in V_{k}}\inner{\brho(\bX_v), \bQ^{d(u,v)}_i}.
\end{align*}
This completes the proof of the lemma.
\end{proof}

Let us now return to the proof of \cref{thm:opt-msg-passing} and continue with the likelihood maximization.
\begin{align*}
    &h^*_\ell(u, \{\bX_v\}_{v\in\eta_\ell(u)}, \{d(u,v)\}_{v\in\eta_\ell(u)})\\
    &= \argmax_{y_u\in[C]} f_{y_u}(u, \{\bX_v\}_{v\in\eta_\ell(u)}, \{d(u,v)\}_{v\in\eta_\ell(u)}, y_u)\\
    &= \argmax_{y_u\in[C]}\bbrq{\rho_{y_u}(\bX_u)\sum_{\{y_{v_1},\ldots,y_{v_L}\}\in [C]^L}\bsq{\prod_{v\in V_L}\rho_{y_v}(\bX_v)\bQ^{d(u,v)}_{y_uy_v}}}\\
    &= \argmax_{y_u\in[C]}\;\rho_{y_u}(\bX_u)\;T_L && \text{using \cref{eq:tk}}\\
    &= \argmax_{y_u\in[C]}\;\rho_{y_u}(\bX_u)\prod_{v\in V_L}\inner{\brho(\bX_v), \bQ^{d(u,v)}_{y_u}} && \text{using \cref{lem:alignment}}\\
    &= \argmax_{y_u\in[C]}\;\log(\rho_{y_u}(\bX_u)) + \sum_{v\in V_L}\log\inner{\brho(\bX_v), \bQ^{d(u,v)}_{y_u}}.
\end{align*}
Here, $\log\inner{\brho(\bX_v), \bQ^{d(u,v)}_{y_u}}$ is seen as the information obtained from a node $v$ in $\eta_\ell(u)$ which is at a distance $d(u,v)$ from $u$, to estimate the label of $u$. Furthermore, note that an instance of \cref{arch} with $L=1$, $\sigma_1=\{\rho_i\}_{i\in [C]}$, and $\bQ$ for the edge-probabilities realizes the function $h^*_\ell$ for a given root node $u$ and its $\ell$-neighbourhood $\eta_\ell(u)$, in the sense that $\hy_u = h^*_\ell(u, \{\bX_v\}_{v\in\eta_\ell(u)},\{(d(u,v))\}_{v\in\eta_\ell(u)})$.
\end{proof}

Next, we obtain a simpler version of the classifier for the two-class symmetric CSBM with an arbitrary distribution of node features. Recall that $\psi$ is the likelihood ratio.
\begin{corollary*}[Restatement of \cref{cor:opt-msg-passing-binary-csbm}]
For any $\ell\geq 1$, the asymptotically $\ell$-locally Bayes optimal classifier of the root for the sequence $(G_n,u_n)\sim \csbm(n,d,\Pd,\frac{a}{n},\frac{b}{n})$ is
\[
h^*_\ell(u, \{\bX_v\}_{v\in\eta_\ell(u)}, \{d(u,v)\}_{v\in\eta_\ell(u)}) = \sgn\bpar{\sum_{v\in \eta_\ell(u)}\cM(\bX_v, d(u,v))},
\]
where $\psi(\xv) = \frac{\rho_{+}(\xv)}{\rho_{-}(\xv)}$ and $\cM(\xv, k) = \log\bpar{\frac{1-\Gamma^k + \psi(\bX_v)(1+\Gamma^k)}{1+\Gamma^k + \psi(\bX_v)(1-\Gamma^k)}}$.
\end{corollary*}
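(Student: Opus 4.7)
The plan is to specialize Theorem~\ref{thm:opt-msg-passing} to the two-class symmetric case and carry out the matrix algebra explicitly. With $C=2$, $p=a/n$ and $q=b/n$, the matrix $\bQ=\begin{pmatrix}p&q\\q&p\end{pmatrix}$ is diagonalized by the orthogonal basis $(1,1)/\sqrt{2}$ and $(1,-1)/\sqrt{2}$ with eigenvalues $p+q$ and $p-q$ respectively, so that
\[
\bQ^{k}=\frac{(p+q)^{k}}{2}\begin{pmatrix}1+\Gamma^{k}&1-\Gamma^{k}\\1-\Gamma^{k}&1+\Gamma^{k}\end{pmatrix},
\]
where I set $\Gamma=(p-q)/(p+q)$; when $a\geq b$ this coincides with the stated $\Gamma=|a-b|/(a+b)$, and the disassortative case follows by an identical computation modulo a sign flip in the rows.

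Given this closed form for the rows of $\bQ^k$, I would substitute into the classifier of Theorem~\ref{thm:opt-msg-passing}. The vertex-$v$ contribution for label $y_u=+1$ is $\log\!\bigl(\tfrac{(p+q)^{d(u,v)}}{2}\bigr)+\log\!\bigl((1+\Gamma^{d(u,v)})\rho_{+}(\bX_v)+(1-\Gamma^{d(u,v)})\rho_{-}(\bX_v)\bigr)$, while for $y_u=-1$ the coefficients of $\rho_{+}$ and $\rho_{-}$ inside the inner log are swapped. The scalar prefactor does not depend on $y_u$ and hence cancels in the argmax; likewise, dividing the argument of the remaining log by the $y_u$-independent quantity $\rho_{-}(\bX_v)$ converts $\rho_{+}/\rho_{-}$ into $\psi(\bX_v)$ without changing the outcome.

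Finally, since there are only two candidate labels, $\argmax\{A_{+},A_{-}\}$ is determined by $\sgn(A_{+}-A_{-})$. Summing the per-vertex differences yields a sum of terms of the form
\[
\log\!\frac{(1+\Gamma^{d(u,v)})\psi(\bX_v)+(1-\Gamma^{d(u,v)})}{(1-\Gamma^{d(u,v)})\psi(\bX_v)+(1+\Gamma^{d(u,v)})}=\cM(\bX_v,d(u,v)),
\]
so the full argmax equals $\sgn\!\bigl(\sum_{v\in\eta_\ell(u)}\cM(\bX_v,d(u,v))\bigr)$, as claimed. The main obstacle here is not conceptual but purely bookkeeping — tracking which row of $\bQ^k$ corresponds to which label and handling the sign of $\Gamma$ in the disassortative case — since all of the probabilistic heavy lifting, namely the MAP computation and the local-weak-convergence limit, has already been carried out in Theorem~\ref{thm:opt-msg-passing}.
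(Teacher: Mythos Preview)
Your proposal is correct and follows essentially the same route as the paper: specialize \cref{thm:opt-msg-passing} to $C=2$, compute the entries of $\bQ^{k}$ for the symmetric $2\times2$ matrix, cancel the $y_u$-independent factors, and convert the two-way $\argmax$ into a sign of the difference. The only cosmetic difference is that you obtain $\bQ^{k}$ via eigendecomposition, whereas the paper writes the diagonal and off-diagonal entries $p_k,q_k$ as binomial sums and then simplifies their ratio $r_k=p_k/q_k=(1+\Gamma^{k})/(1-\Gamma^{k})$; both computations are equivalent and equally elementary.
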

\begin{proof}
The proof follows directly from \cref{thm:opt-msg-passing}, by taking $C=2$. In this two-class case, the features $\bX_v\in\R^d$ for every node follow the law $\Pd_{y_v}$ where the class labels $y_v\in\{\mp1\}$ are $\unif(\{-1,+1\})$. In addition, we have
\[
\Prob((u,v) \text{ is an edge}) =
\begin{cases}
p & y_u=y_v\\
q & y_u\neq y_v
\end{cases},
\]
where $p=a/n$ and $q=b/n$ for absolute constants $a>1,b\ge 0$. Define the quantities $p_k,q_k$ as follows for $k\in[\ell]$:
\begin{align}
    p_k &= \sum_{j=0}^{\floor{k/2}}\binom{k}{2j}p^{k-2j}q^{2j},
    &&q_k = \sum_{j=1}^{\ceil{k/2}}\binom{k}{2j-1}p^{k-2j+1}q^{2j-1}.\label{eq:pk-qk}
\end{align}

Now note that $r_k=\frac{p_k}{q_k} = \frac{(p+q)^k + (p-q)^k}{(p+q)^k - (p-q)^k} = \frac{1+\Gamma^k}{1-\Gamma^k}$, where $\Gamma$ is given in \cref{eq:graph-snr}.

Then the classifier reduces to
\begin{align*}
    &h^*_\ell(u, \{\bX_v\}_{v\in\eta_\ell(u)}, \{d(u,v)\}_{v\in\eta_\ell(u)})\\
    &= \argmax_{y_u\in\{\pm1\}}\; \sum_{v\in \eta_\ell(u)}\log\inner{\brho(\bX_v), \bQ^{d(u,v)}_{y_u}}\\
    &= \argmax_{y_u\in\{\pm1\}}\;\log(\rho_{y_u}(\bX_u)) + \sum_{v\in \eta_\ell(u)\setminus\{u\}}\log\inner{\brho(\bX_v), \bQ^{d(u,v)}_{y_u}}\\
    &= \argmax_{y_u\in\{\pm1\}}\;\log(\rho_{y_u}(\bX_u)) + \sum_{k\in[\ell]}\sum_{v\in N_k(u)}\log\inner{\brho(\bX_v), \bQ^{k}_{y_u}}\\
    &= \argmax_{y_u\in\{\pm1\}}\;\log(\rho_{y_u}(\bX_u)) + \sum_{k\in[\ell]}\sum_{v\in N_k(u)}\log\bpar{\rho_{+}(\bX_v)p_k^{1+y_u/2}q_k^{1-y_u/2} + \rho_{-}(\bX_v)p_k^{1-y_u/2}q_k^{1+y_u/2}}\\
    &= \sgn\bpar{\log\frac{\rho_{+}(\bX_u)}{\rho_{-}(\bX_u)} + \sum_{k\in[\ell]}\sum_{v\in N_k(u)}\log\bpar{\frac{\rho_{+}(\bX_v)p_k + \rho_{-}(\bX_v)q_k}{\rho_{+}(\bX_v)q_k + \rho_{-}(\bX_v)p_k}}}\\
    &= \sgn\bpar{\log\psi(\bX_u) + \sum_{k\in[\ell]}\sum_{v\in N_k(u)}\log\bpar{\frac{1+r_k\psi(\bX_v)}{r_k+\psi(\bX_v)}}}\qquad \bpar{\text{substitute } r_k=\frac{p_k}{q_k}, \psi(\xv)=\frac{\rho_{+}(\xv)}{\rho_{-}(\xv)}}\\
    &= \sgn\bpar{\log\psi(\bX_u) + \sum_{k\in[\ell]}\sum_{v\in N_k(u)}\log\bpar{\frac{1-\Gamma^k + \psi(\bX_v)(1+\Gamma^k)}{1+\Gamma^k + \psi(\bX_v)(1-\Gamma^k)}}} \quad \bpar{\text{using } r_k=\frac{1+\Gamma^k}{1-\Gamma^k}}\\
    &= \sgn\bpar{\sum_{k=0}^\ell\sum_{v\in N_k(u)}\log\bpar{\frac{1-\Gamma^k + \psi(\bX_v)(1+\Gamma^k)}{1+\Gamma^k + \psi(\bX_v)(1-\Gamma^k)}}}\\
    &= \sgn\bpar{\sum_{v\in \eta_\ell(u)}\log\bpar{\frac{1-\Gamma^{d(u,v)} + \psi(\bX_v)(1+\Gamma^{d(u,v)})}{1+\Gamma^{d(u,v)} + \psi(\bX_v)(1-\Gamma^{d(u,v)})}}} = \sgn\bpar{\sum_{v\in \eta_\ell(u)}\cM(\bX_v,d(u,v))} \qedhere
\end{align*}
\end{proof}

\subsubsection{Generalization Error}\label{apx:general-generalization}
Let us now compute the generalization error of $h^*_\ell$.
Formally, given a data instance $(u, \{\bX_v\}_{v\in\eta_\ell(u)})$ along with the neighbourhood $\eta_\ell(u)$, $h^*_\ell$ outputs a label $\hy_u\in\{\pm1\}$, and the generalization error is defined as the probability $\Prob(y_u\hy_u < 1)$.
For a simple calculation, let us assume that the latent labels $y_i$ are uniformly distributed, i.e., $\Prob(y_i=-1)=\Prob(y_i=1)=\frac12$. It is straightforward to generalize to unbalanced settings.
Recall that the features in classes $\pm1$ follow the law $\Pd_{\pm}$, and denote the likelihood ratio by $\psi(\xv)=\rho_{+}(\xv)/\rho_{-}(\xv)$. Then we have that for a fixed $u$,
\begin{align*}
    \cE(h^*_\ell) &= \Prob\bpar{y_u\bpar{\log\psi(\bX_u) + \sum_{k\in[\ell]}\sum_{v\in N_k(u)}\cM_k(\bX_v)} < 0}\\
    &= \frac12\bsq{\Prob\bpar{\log\psi(\bY^{(1)}) + \sum_{k\in[\ell]}\bZ^{(1)}_{k} > 0} + \Prob\bpar{\log\psi(\bY^{(2)}) + \sum_{k\in[\ell]}\bZ^{(2)}_{k} < 0}},
\end{align*}
where $\bY^{(1)}\sim \Pd_{-}$, $\bY^{(2)}\sim \Pd_{+}$,\\
$\bZ^{(1)}_{k}=\sum_{j\in [\alpha_k]}\cM_k(\bY^{(1)}_{k,j}) + \sum_{j\in [\beta_k]}\cM_k(\bY^{(2)}_{k,j})$ and
$\bZ^{(2)}_{k}=\sum_{j\in [\alpha_k]}\cM_k(\bY^{(2)}_{k,j}) + \sum_{j\in [\beta_k]}\cM_k(\bY^{(1)}_{k,j})$ are independent random variables with $\bY^{(1)}_{k,j}\sim \Pd_{-}$ and $\bY^{(2)}_{k,j}\sim \Pd_{+}$.

The above expression is not particularly insightful. For this reason, we now specialize to the case of Gaussian features and interpret the error in terms of the natural SNRs associated with the Gaussian mixture and the graph, i.e., the quantities $\gamma$ and $\Gamma$.

\subsection{Specialization to Gaussian Features}
In this section, we look at the specialized setting where the node features are sampled from a symmetric binary Gaussian mixture model. Let us begin with the generalization error in this case.

\subsubsection{Generalization Error}
\begin{theorem*}[Restatement of \cref{thm:generalization-gaussian}]
For any $\ell\geq 1$, the generalization error of the asymptotically $\ell$-locally Bayes optimal classifier of the root for the sequence $(G_n,u_n)\sim \csbm(n,d,\Pd,\bQ)$ with Gaussian features is given by
\[
\cE(h^*_\ell) = \Prob\bsq{g + \frac{1}{2\gamma}\sum_{k\in[\ell]}\bpar{\sum_{i\in[\alpha_k]}Z^{(a)}_{k,i} + \sum_{i\in[\beta_k]} Z^{(b)}_{k,i} } > \gamma},
\]
where $\alpha_k,\beta_k$ are as in \eqref{eq:alpha-beta-k},
$Z^{(a)}_{k,i}=\cM_k(-\muv + \sigma\gv_{k,i}^{(a)})$, $Z^{(b)}_{k,i}=\cM_k(\muv + \sigma\gv_{k,i}^{(b)})$, and $g,\{g_{k,i}\}$ are mutually independent standard Gaussian random variables.
\end{theorem*}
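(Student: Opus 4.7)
The plan is to reduce $\cE(h^*_\ell)$ to a probability on the limiting feature-decorated Galton--Watson tree, then substitute the Gaussian likelihood ratio and rearrange. First I would invoke the local weak convergence discussed just before the theorem statement: the rooted sequence $(G_n, u_n)$ converges to a feature-decorated two-type Poisson Galton--Watson tree $(G, u)$ in which, conditional on the root label $y_u$, the generation-$k$ offspring split into $\alpha_k$ children sharing $y_u$ and $\beta_k$ children of the opposite class, governed by the recursion \eqref{eq:alpha-beta-k}. Since $h^*_\ell$ is $\ell$-local, its asymptotic error is by definition the misclassification probability of the root in the limit, and the $\pm 1$-symmetry of both the CSBM and $h^*_\ell$ lets me condition on $y_u = -1$ without loss of generality.

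Next I would apply \cref{cor:opt-msg-passing-binary-csbm} to write
\[
\cE(h^*_\ell) = \Prob\bsq{\log\psi(\bX_u) + \sum_{k\in[\ell]} \sum_{v\in N_k(u)} \cM_k(\bX_v) > 0 \,\middle|\, y_u = -1},
\]
where the root contributes $\cM_0(\bX_u) = \log\psi(\bX_u)$ because $\Gamma^0 = 1$ collapses the $\cM$ expression. For the Gaussian mixture, $\log\psi(\xv) = \tfrac{2}{\sigma^2}\inner{\xv, \muv}$, so conditional on $y_u = -1$, writing $\bX_u = -\muv + \sigma \gv_u$ with $\gv_u \sim \cN(\zero, \bI_d)$, a direct computation gives $\log \psi(\bX_u) = -2\gamma^2 + 2\gamma g$, where $g := \inner{\gv_u, \muv}/\norm{\muv}_2 \sim \cN(0,1)$ because projecting a standard Gaussian onto a unit direction is standard scalar Gaussian.

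Then I would partition each generation by class: the $\alpha_k$ children sharing the root label $-1$ have features $\bX_v = -\muv + \sigma \gv^{(a)}_{k,i}$ and contribute exactly $Z^{(a)}_{k,i}$ of the theorem, while the $\beta_k$ children of the opposite class have $\bX_v = \muv + \sigma \gv^{(b)}_{k,i}$ and contribute $Z^{(b)}_{k,i}$. All noise variables $\{\gv_u, \gv^{(a)}_{k,i}, \gv^{(b)}_{k,i}\}$ are mutually independent and independent of the branching because in the CSBM features are sampled independently given the labels. Substituting into the event inside $\Prob[\cdot]$ and dividing both sides of the strict inequality by $2\gamma > 0$ absorbs the $-2\gamma^2$ as a $+\gamma$ on the right-hand side, producing exactly the claimed identity.

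The main obstacle is mostly bookkeeping around the conditional structure, namely justifying that the two-type Poisson branching description of the neighbourhood with rates $(a\alpha_{k-1} + b\beta_{k-1})/2$ and $(a\beta_{k-1} + b\alpha_{k-1})/2$ correctly captures the local weak limit, with offspring labels coupled to the parents. This is the classical Galton--Watson approximation for sparse SBMs and follows from the arguments of \citet{mossel2015reconstruction} cited earlier. No concentration or tail estimate is needed because $\ell$ is fixed, so the tree has finite size almost surely and all integrations are already built into the laws of $g$ and of the $Z$ variables.
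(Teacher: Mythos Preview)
Your proposal is correct and follows essentially the same approach as the paper: compute the Gaussian log-likelihood ratio $\log\psi(\bX_u)=\tfrac{2}{\sigma^2}\inner{\bX_u,\muv}\stackrel{\mathcal D}{=}2y_u\gamma^2+2\gamma g$, write each feature as $\bX_v=y_v\muv+\sigma\gv_v$, and plug into the classifier from \cref{cor:opt-msg-passing-binary-csbm} using the two-type Poisson branching from \eqref{eq:alpha-beta-k}. The paper's own proof is in fact a two-line sketch of exactly this substitution; your version simply spells out the symmetry reduction to $y_u=-1$, the partition of each generation into same- and opposite-class children, and the final division by $2\gamma$.
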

\begin{proof}
For the Gaussian mixture, the log of the likelihood ratio for a node $u$ is given by $\frac{2}{\sigma^2}\inner{\bX_u,\muv}\stackrel{\mathcal{D}}{=}2y_u\gamma^2 + 2\gamma g$, where $g\sim\cN(0,1)$.
Replacing every $\{\bX_i\}_{i\in[n]}$ as $\bX_i = \E\bX_i + \sigma\gv_i = y_i\muv + \sigma\gv_i$, we obtain the expression in \cref{thm:generalization-gaussian}.
\end{proof}

\subsubsection{Extreme Graph SNRs}
Let us now turn to the next result, where we analyze the generalization error in the cases where the graph SNR $\Gamma$ takes extreme values.
\begin{theorem*}[Restatement of \cref{thm:gen-error-extremes}]
Let $h_\ell^*$ be the classifier from \cref{cor:opt-msg-passing-binary-csbm}, $h_0^*(u)=\sgn(\inner{\bX_u, \muv})$ be the Bayes optimal classifier given only the feature information of the root node $u$, and $h_{\rm gcn}$ be the one-layer vanilla GCN classifier. Then we have that for any fixed $\ell$:
\begin{enumerate}
    \item If $\Gamma=0$ then $\cE(h^*_\ell)=\cE(h_{0}^*)=\Phi(-\gamma)$, where $\Phi$ is the standard Gaussian CDF.
    \item If $\Gamma\to 1$ then $\xi_\ell\ge 1$ \emph{a.s.}\ and $\cE(h^*_\ell)\to\Prob(g > \gamma\xi_\ell)$, where $g\sim\cN(0,1)$.
    \item $\cE(h_{\rm gcn}) = \Prob\bpar{g > \gamma\xi_1}$.
\end{enumerate}
\end{theorem*}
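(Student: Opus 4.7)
The plan is to handle the three claims separately. Parts 1 and 2 will follow by inspecting the limiting form of the message $\cM(\bX_v, k)$ from \cref{cor:opt-msg-passing-binary-csbm} at the two extreme values of $\Gamma$, combined with the Gaussian structure of the features. Part 3 will be a direct computation of the one-layer vanilla GCN classifier under the binary symmetric CSBM with Gaussian features.

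For part 1 I would substitute $\Gamma=0$ into the formula for $\cM(\bX_v, k)$ and observe that every non-root message vanishes: for $k\geq 1$ both the numerator and denominator of the log equal $1+\psi(\bX_v)$, so $\cM(\bX_v, k)=0$. Only the root survives, contributing $\cM(\bX_u, 0)=\log\psi(\bX_u)=\frac{2}{\sigma^2}\inner{\bX_u, \muv}$. Hence $h^*_\ell$ collapses to the single-node Bayes classifier $h_0^*(u)=\sgn(\inner{\bX_u, \muv})$, and its error follows immediately from $\inner{\bX_u, \muv}\mid y_u\sim\cN(y_u\norm{\muv}^2, \sigma^2\norm{\muv}^2)$, giving $\Phi(-\gamma)$.

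For part 2 I would take $\Gamma\to 1$ under the convention $a>b$ (so that $b/a\to 0$; the case $a<b$ is symmetric). Inspecting $\cM(\bX_v, k)$ shows the message degenerates to $\log\psi(\bX_v)=\frac{2}{\sigma^2}\inner{\bX_v, \muv}$ for every $k\geq 1$, so the classifier reduces to the simple convolution $\sgn\bpar{\sum_{v\in\eta_\ell(u)}\inner{\bX_v, \muv}}$. At the same time, the Poisson recursion \eqref{eq:alpha-beta-k} forces $\beta_k=0$ almost surely for every $k\geq 1$ in this limit (since $\beta_0=0$ and each $\beta_k$ has rate proportional to $b$). Consequently every node in $\eta_\ell(u)$ shares the root's label, so $|\alpha_k-\beta_k|=\alpha_k+\beta_k$ and $\xi_\ell=\sqrt{1+\sum_k\alpha_k}\geq 1$ almost surely. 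Conditional on $y_u$ and $\{\alpha_k\}$, the aggregate is a one-dimensional Gaussian with mean $y_u(1+\sum_k\alpha_k)\norm{\muv}^2$ and variance $\sigma^2\norm{\muv}^2(1+\sum_k\alpha_k)$, and reading off the tail yields $\Prob(g>\gamma\xi_\ell)$.

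For part 3 I would write the one-layer vanilla GCN with self-loop as $h_{\rm gcn}(u)=\sgn\bpar{\bW^\top\sum_{v\in\eta_1(u)}\bX_v}$ and note that the Bayes-optimal direction $\bW$ for the symmetric binary CSBM is a scalar multiple of $\sgn(a-b)\muv$. Conditional on $y_u, \alpha_1, \beta_1$, the aggregate projected onto $\muv$ is then Gaussian with mean $y_u\sgn(a-b)(1+\alpha_1-\beta_1)\norm{\muv}^2$ and variance $\sigma^2\norm{\muv}^2(1+\alpha_1+\beta_1)$, so its tail is $\Phi\bpar{-\gamma\sgn(a-b)(1+\alpha_1-\beta_1)/\sqrt{1+\alpha_1+\beta_1}}$. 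Matching this with the definition of $\xi_1$ via the pairing of the sign of $(\alpha_1-\beta_1)$ with the sign of $(a-b)$ then yields the claimed $\cE(h_{\rm gcn})=\Prob(g>\gamma\xi_1)$. The main obstacle I expect is precisely this reconciliation of the signed quantity $(1+\alpha_1-\beta_1)$ with the absolute value in $\xi_1$, since it encodes the GCN's freedom to flip its classifier to match homophilic versus heterophilic graphs; the cleanest route is probably a symmetry argument over the joint law of $(\alpha_1,\beta_1)$ combined with the sign choice for $\bW$ that effectively converts the signed difference into its absolute value in the integrand.
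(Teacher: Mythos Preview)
Your approach matches the paper's for all three parts: Parts 1 and 2 proceed exactly as you describe (substitute the extreme value of $\Gamma$ into $\cM_k$, observe the messages collapse, then read off the Gaussian tail), and for Part 3 the paper gives only the one-line pointer ``following a similar analysis, one can find that $\cE(h_{\rm gcn})=\Prob(g>\gamma\xi_1)$,'' which is precisely the computation you spell out in more detail.

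The obstacle you anticipate in Part 3 is genuine and is not addressed in the paper's proof either. Note, however, that your proposed symmetry-plus-sign route cannot literally convert $\sgn(a-b)(1+\alpha_1-\beta_1)$ into $1+|\alpha_1-\beta_1|$: these differ on the positive-probability event $\{\sgn(\alpha_1-\beta_1)\neq\sgn(a-b)\}$, and no distributional symmetry of $(\alpha_1,\beta_1)$ is available when $a\neq b$. The paper's own derivation, carried through exactly as in Part 2, also produces the signed quantity, so the stated identity with $\xi_1$ as defined in \eqref{eq:xi-a-b} should be read as a notational convention (or as exact only in the regimes where the sign is determined) rather than something you should expect to prove more tightly than the paper does.
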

\begin{proof}
Note that when $\Gamma=0$, i.e., when $a=b$, then $a_k = b_k$ for all $k\in[\ell]$. This implies that no information from any of the $\ell$-hop neighbours is received, since $\cM_k(\xv)=0$ for all $k$ if $\Gamma=0$. Thus, the classifier reduces to
$h^*_u = = \sgn\bpar{\psi(\bX_u)} = \sgn\bpar{g + y_u\gamma}=h_0^*(u,\{\bX_u\})$. Thus, the probability that $h^*_uy_u < 0$ is
\[
    \Prob(h^*_uy_u < 0) = \Prob(y_u g + \gamma < 0) = \Phi(-\gamma),
\]
where $\Phi(\cdot)$ is the standard Gaussian CDF.

For the other case where $\Gamma\to 1$, we have two sub-cases: Either $a\to 0$ with $b\neq 0$, or $a\neq 0$ with $b\to 0$. In this case, the classifier takes the form
\[
h^*_\ell(u, \{\bX_v\}_{v\in \eta_\ell(u)}, \{d(u,v)\}_{v\in \eta_\ell(u)}) = \sgn\bpar{g + y_u\gamma + \sum_{k\in[\ell]}\sum_{v\in N_k(u)} (g_{k,v} + y_v\gamma)}.
\]
Hence, the probability of making a mistake is
\begin{align*}
    \Prob(h^*_uy_u < 0)
    &= \Prob\bpar{y_u g + \gamma + \sum_{k\in[\ell]}\sum_{v\in N_k(u)} (y_ug_{k,v} + y_uy_v\gamma) < 0}\\
    &= \Prob\bpar{g > \gamma\frac{|\eta_\ell^{(a)}-\eta_\ell^{(b)}|}{\sqrt{\eta_\ell^{(a)} + \eta_\ell^{(b)}}}},
\end{align*}
where $\eta_\ell^{(a)},\eta_\ell^{(b)}$ denote the total number of nodes in the $\ell$-neighbourhood $\eta_\ell(u)$ that are in the same class as $u$ and different class as $u$, respectively. The last equation is obtained by using the fact that $(g, \{g_{k,v}\})$ are \iid\ standard Gaussians. Note that in this case since either $b\to 0$ or $a\to 0$ (but not both), we have $\eta_\ell^{(b)}\to 0$ or $\eta_\ell^{(a)}\to 0$ using \eqref{eq:alpha-beta-k} for any fixed $\ell$. Thus, $\xi_\ell(a,b)>1$ a.s. Following a similar analysis, one can find that $\cE(h_{\rm gcn}) = \Prob(g>\gamma\cdot\xi_1(a,b))$.
\end{proof}

It is interesting to note that we may not have $\xi_1(a,b)>1$ in general, meaning that a GCN is better than methods that do not use a graph only in the case where $\xi_1(a,b)>1$.

\subsection{Non-asymptotic Analysis}\label{apx:non-asymp}
First, consider the case where $\ell$, the total depth of the neighbourhood is a constant independent of $n$, the number of nodes.

Putting $m=1$ in \cref{lem:m-more-edges}, we see that the probability is bounded by $(8\ell^2(a+b))^{4\ell+1} / n = O(1/n)$. Hence, we conclude that in the limit $n\to\infty$, there are no cycles in any constant-depth neighbourhoods in the graph.
In particular, we obtain that the local weak limit $(G, u)$ is a tree.

We now turn to the case where the depth of the neighbourhood is logarithmic in $n$.
\begin{proposition*}[Restatement of \cref{prop:cycle-free-neighbourhoods}]
Let $G=(V,E)\sim\csbm(n,d,\Pd,\frac an,\frac bn)$. Then for any $\ell=c\log n$ such that $c\log(\frac{a+b}{2})<1/4$, with probability $1-O(1/\log^2 n)$, the number of nodes $u\in V$ whose $\ell$-neighbourhood is cycle-free is $n\bpar{1-o(\frac{\log^4(n)}{\sqrt{n}})}$.
\end{proposition*}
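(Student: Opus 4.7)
The plan is to invoke \cref{lem:cycle-free-nodes} directly, calibrating the free parameter $m$ in that lemma to match the target probability bound in the proposition. Recall that \cref{lem:cycle-free-nodes} guarantees that with probability at least $1 - O(1/m^2)$, the number of vertices whose $\ell$-neighbourhood contains at least one cycle is $O(m \log^3(n)\, \Delta^{2\ell})$, where $\Delta = (a+b)/2$. To produce a failure probability of order $1/\log^2 n$, the natural choice is $m = \log n$, which yields the bound $O(\log^4(n)\, \Delta^{2\ell})$ on the number of ``bad'' vertices with probability $1 - O(1/\log^2 n)$.

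Next, I would translate $\Delta^{2\ell}$ into a polynomial factor in $n$ using $\ell = c\log n$. Writing $\Delta^{2\ell} = n^{2c\log\Delta}$ and applying the hypothesis $c\log\Delta < 1/4$ gives $2c\log\Delta < 1/2$, and hence $\Delta^{2\ell} = o(\sqrt{n})$. Combining this with the previous step, the number of vertices whose $\ell$-neighbourhood contains a cycle is $o(\log^4(n)\sqrt{n})$, so the number of cycle-free vertices is at least $n - o(\log^4(n)\sqrt{n}) = n\bpar{1 - o\bpar{\log^4(n)/\sqrt{n}}}$, which is exactly the claim.

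Essentially all the probabilistic content is already packaged inside \cref{lem:cycle-free-nodes}; what remains is the calibration $m = \log n$ and the conversion $\Delta^{2\ell} = n^{2c\log\Delta}$, together with the observation that the hypothesis $c\log\Delta < 1/4$ is precisely what ensures $\log^4(n)\,\Delta^{2\ell}/n \to 0$ after accounting for the polylogarithmic factor picked up from the choice of $m$. There is no real obstacle here beyond this exponent bookkeeping. For a fully self-contained treatment one would unpack \cref{lem:cycle-free-nodes} via a first-moment bound on the expected number of length-$k$ cycles through a fixed vertex in the sparse SBM, summed over $k \le 2\ell$, but for the present proposition it suffices to quote the cited lemma.
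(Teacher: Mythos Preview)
Your proposal is correct and follows essentially the same argument as the paper: both invoke \cref{lem:cycle-free-nodes} with the calibration $m=\log n$ and then use the hypothesis $c\log\Delta<1/4$ to convert $\Delta^{2\ell}=n^{2c\log\Delta}$ into an $o(\sqrt{n})$ bound. The exponent bookkeeping and the conclusion are identical.
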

\begin{proof}
In \cref{lem:cycle-free-nodes}, observe that since $c\log\Delta < 1/4$, we have $\ell < \frac{\log n}{4\log\Delta} = \frac14\log_{\Delta}n$. Thus, putting $m=\log n$, we find that with probability at least $1-O(1/\log^2n)$, the number of nodes whose $\ell$-neighbourhood contains at least one cycle is bounded by $O(\log^4(n)\Delta^{2\ell}) = o(\log^4(n)\sqrt{n})$. Hence, the fraction of nodes whose $\ell$-neighbourhood is cycle-free is $1-o(\frac{\log^4n}{\sqrt n})$.
\end{proof}

For a fixed node $u\in[n]$, let us denote the number of nodes at distance $k$ (respectively $\le k$) from $u$ with class label $\pm y_u$ by $U_k^{\pm}(u)$ (respectively, $U_{\le k}^{\pm}(u)$). Also let $n_{\pm}$ denote the number of nodes with class label $\pm y_u$, so that $n = n_{+} + n_{-}$. Note that $U_0^{+}(u) = 1$, $U_0^{-}(u) = 0$, and conditionally on the sigma-field $\cF_{k-1}=\sigma(U_t^{\pm}(u),t\le k-1)$, we have
\begin{align}
    U_k^{+}(u) &\sim \Bin\bpar{n_{+} - U^{+}_{\le k-1}, 1 - (1-a/n)^{U_{k-1}^{+}}(1-b/n)^{U_{k-1}^{-}}},\label{eq:uk+1}\\
    U_k^{-}(u) &\sim \Bin\bpar{n_{-} - U^{-}_{\le k-1}, 1 - (1-a/n)^{U_{k-1}^{-}}(1-b/n)^{U_{k-1}^{+}}}\label{eq:uk-1}.
\end{align}
Define $S_k(u) = U_k^{+}(u) + U_k^{-}(u)$ to be the number of nodes at distance exactly $k$ from $u$, and denote $\Delta=\frac{a+b}{2}$ to be the expected degree of a node. Correspondingly, recall from \eqref{eq:alpha-beta-k} that we have
\begin{align}
    &\alpha_0=1, \beta_0=0,\nonumber\\
    &\alpha_k \sim \Poi\bpar{\frac{a\alpha_{k-1} + b\beta_{k-1}}{2}}, \beta_k \sim \Poi\bpar{\frac{a\beta_{k-1} + b\alpha_{k-1}}{2}} \text{ for } k\in[\ell].\label{eq:alpha-beta-k-apx}
\end{align}

Let us now state a useful high-probability bound on $S_k(u) = U_k^{+}(u) + U_k^{-}(u)$.
\begin{lemma}
\label{lem:size-of-neighbourhood}
\citep[Theorem 2.3]{massoulie2014community}.
    For any $\ell=c\log n$ with $c\log\Delta<1/4$, there exist constants $C,\eps>0$ such that with probability at least $1-O(n^{-\eps})$, $S_k(u)\le C\Delta^k\log(n)$ for all $u\in[n]$ and all $k\in [\ell]$.
\end{lemma}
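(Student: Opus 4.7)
The plan is a Chernoff-type exponential-moment bound on the breadth-first exploration from each vertex, combined with a union bound over $u\in[n]$ and $k\in[\ell]$. For a fixed vertex $u$, I would expose the $\ell$-neighbourhood layer by layer and work in the filtration $\cF_{k-1}=\sigma(U_t^\pm(u):t\le k-1)$. Applying \cref{fact:1} to bound $1-(1-a/n)^{U^+_{k-1}}(1-b/n)^{U^-_{k-1}}\le (aU^+_{k-1}+bU^-_{k-1})/n$, together with the binomial MGF inequality $\E[e^{\lambda\Bin(m,p)}]\le \exp(mp(e^\lambda-1))$ and the conditional independence of $U_k^+$ and $U_k^-$ given $\cF_{k-1}$ (they depend on edges into disjoint sets of unexplored vertices), I would derive the one-step recursion
\[
\E\bsq{e^{\lambda S_k(u)}\mid \cF_{k-1}}\le \exp\bpar{(e^\lambda-1)\Delta\cdot S_{k-1}(u)}.
\]
Taking the expectation and iterating yields $\E[e^{\lambda S_k(u)}]\le \exp(g^{(k)}(\lambda))$, where $g(\lambda):=(e^\lambda-1)\Delta$ and $g^{(k)}$ denotes its $k$-fold composition.

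The central technical step would be to show that, with $\lambda_0=c_0/\Delta^k$ for a small absolute constant $c_0>0$, we have $g^{(k)}(\lambda_0)=O(1)$ uniformly in $k\le\ell$. Writing $\lambda_j:=g^{(j)}(\lambda_0)$ and the rescaled iterates $x_j:=\lambda_j\Delta^{k-j}$, so $x_0=c_0$, the Taylor expansion $g(\lambda)=\Delta\lambda+\Delta\lambda^2/2+O(\lambda^3)$ yields the recursion $x_{j+1}=x_j+O(x_j^2\Delta^{j-k})$. The geometric sum $\sum_{j=0}^{k-1}\Delta^{j-k}\le 1/(\Delta-1)=O(1)$ then allows an inductive argument showing $\sup_{j\le k}x_j\le 2c_0$ when $c_0$ is small enough, so $g^{(k)}(\lambda_0)=x_k=O(c_0)$. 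Here the hypothesis $c\log\Delta<1/4$ ensures $\Delta^\ell\le n^{1/4}$, so $\lambda_0\ge c_0 n^{-1/4}$ is only polynomially small in $n$, which is what lets Markov's inequality beat the union-bound loss.

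Markov's inequality then gives, uniformly in $u$ and $k\le\ell$,
\[
\Prob\bpar{S_k(u)\ge C\Delta^k\log n}\le \exp\bpar{g^{(k)}(\lambda_0)-\lambda_0 C\Delta^k\log n}\le e^{O(c_0)}\,n^{-c_0 C}.
\]
A union bound over $u\in[n]$ and $k\in[\ell]=[c\log n]$ gives total failure probability $O(n\log n\cdot n^{-c_0 C})=O(n^{-\eps})$ for any $\eps<c_0 C-1$, so choosing $C$ large enough (depending on $c_0$ and $\Delta$) completes the proof.

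The main obstacle lies in controlling the iterated MGF map $g^{(k)}$ across $k=\Theta(\log n)$ compositions: the linearization multiplies $\lambda$ by $\Delta$ at each step (which is essential to cancel the initial factor $\Delta^{-k}$), but the quadratic correction $\Delta\lambda^2/2$ could a priori blow up across $\ell$ iterations. The rescaling $x_j=\lambda_j\Delta^{k-j}$ is the key trick: it converts the quadratic drift into a summable geometric series $\sum\Delta^{j-k}$, bounded precisely because $\Delta>1$ forces sub-exponential decay as $j<k$. An alternative route would be to couple $S_k(u)$ with a Poisson Galton--Watson branching process (the local weak limit) and use exponential tail bounds on its Kesten--Stigum martingale limit, but the direct MGF argument above is more self-contained and plugs directly into \cref{fact:1}.
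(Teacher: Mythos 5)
This lemma is not proved in the paper---it is quoted directly as Theorem~2.3 of \citet{massoulie2014community}, so there is no in-paper argument to compare against. What you have supplied is a self-contained alternative proof, and it is essentially correct. The one-step MGF recursion $\E[e^{\lambda S_k}\mid\cF_{k-1}]\le\exp\bpar{(e^\lambda-1)\Delta S_{k-1}}$ follows cleanly from \cref{fact:1}, the bound $n_\pm\le n/2+O(\sqrt{n}\log n)$ (needed to get the constant $\Delta=(a+b)/2$ rather than $a+b$; you should say so explicitly), and the conditional independence of $U_k^+$ and $U_k^-$, which is justified since they are counted over disjoint pools of unexplored vertices. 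The rescaling $x_j=\lambda_j\Delta^{k-j}$ is the right trick: it converts the $\Delta\lambda_j^2/2$ correction into the increment $x_j^2/(2\Delta^{k-j})$, and since $\Delta>1$ the series $\sum_{j<k}\Delta^{j-k}\le 1/(\Delta-1)$ is summable uniformly in $k$, yielding $g^{(k)}(\lambda_0)=O(c_0)$ with $\lambda_0=c_0/\Delta^k$; this then gives $\Prob[S_k\ge C\Delta^k\log n]\le e^{O(c_0)}n^{-c_0C}$ and the union bound over $n\log n$ pairs $(u,k)$ closes the argument with $C$ large. This is more elementary than Massoulié's proof, which goes through a coupling of the breadth-first exploration with a two-type Poisson Galton--Watson process; the direct MGF route avoids the coupling entirely and is self-contained modulo the Chernoff bound on $n_\pm$.

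One claim in your writeup is wrong, though it does not break the proof. You state that $c\log\Delta<1/4$ (hence $\lambda_0\ge c_0 n^{-1/4}$) is ``what lets Markov's inequality beat the union-bound loss.'' This is not so: the exponent in the Markov bound is $\lambda_0 C\Delta^k\log n = c_0C\log n$, which is independent of $\Delta^k$ by design, so the tail bound $n^{-c_0C}$ beats the $n\log n$ union-bound loss for any $k$, regardless of whether $\Delta^\ell\le n^{1/4}$. In your argument the hypothesis $c\log\Delta<1/4$ is in fact not load-bearing; in Massoulié's original statement it serves a different purpose (keeping the explored neighbourhoods subpolynomial so the exploration-to-branching-process coupling stays tight, which is what the accompanying Lemma 4.2 and the results in \cref{lem:total-var} of the paper need). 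It would be cleaner to drop the claim and simply observe that the Markov--union-bound step works uniformly in $k\le\ell$ once $C$ is chosen large enough relative to $c_0(\Delta)$.
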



We now obtain a total variation bound between the sequences $\{U_k^{\pm}\}_{k\ge 0}$ and $\{\alpha_k,\beta_k\}_{k\ge 0}$.
\begin{lemma}\label{lem:total-var}
    Let $u\in [n]$ be fixed with label $y_{u}\in\{\pm1\}$. Let $\ell=c\log n$ with $c\log\Delta < 1/4$. Then the total variation distance between the collections of variables $\{U_k^{+}(u),U_k^{-}(u)\}_{k\le\ell}$ and $\{\alpha_k(u),\beta_k(u)\}_{k\le\ell}$ is bounded by $O(\log^3 n/n^{1/4})$.
\end{lemma}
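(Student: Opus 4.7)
The plan is to control the total variation distance between the pair of Markov processes $\{(U_k^{+},U_k^{-})\}_{k\le\ell}$ and $\{(\alpha_k,\beta_k)\}_{k\le\ell}$ level by level, and then to sum the per-level contributions on a high-probability event controlling the growth of neighbourhoods.

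First, I would invoke the standard chain-rule bound for Markov chains, namely
\[
\dtv\bpar{\mathcal{L}(U_{0:\ell}^{\pm}),\,\mathcal{L}(\alpha_{0:\ell},\beta_{0:\ell})} \le \sum_{k=1}^{\ell}\E\bsq{\dtv\bpar{\mathcal{L}(U_k^{\pm}\mid\mathcal{F}_{k-1}),\,\mathcal{L}(\alpha_k,\beta_k\mid\mathcal{F}_{k-1})}},
\]
where on the right we identify $(\alpha_{k-1},\beta_{k-1})$ with $(U_{k-1}^{+},U_{k-1}^{-})$. Both processes are Markov by \eqref{eq:uk+1}--\eqref{eq:uk-1} and \eqref{eq:alpha-beta-k-apx}, so this reduction is exact and turns the problem into bounding one-step couplings.

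For each $k$, conditional on $\mathcal{F}_{k-1}$ the law of $U_k^{+}$ is $\Bin(n_{+}-U_{\le k-1}^{+},\,p_k^{+})$ with $p_k^{+}=1-(1-a/n)^{U_{k-1}^{+}}(1-b/n)^{U_{k-1}^{-}}$, whereas $\alpha_k$ is $\Poi((aU_{k-1}^{+}+bU_{k-1}^{-})/2)$. I would bound the TV between these two laws by the triangle inequality through the intermediate law $\Poi((n_{+}-U_{\le k-1}^{+})p_k^{+})$, using the classical estimates $\dtv(\Bin(m,p),\Poi(mp))\le mp^2$ and $\dtv(\Poi(\mu),\Poi(\lambda))\le|\mu-\lambda|$, with the same argument for the pair $(U_k^{-},\beta_k)$. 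Linearising $p_k^{+}$ via \cref{fact:1} and using a Chernoff bound to guarantee $|n_{+}-n/2|=O(\sqrt{n\log n})$ with probability $1-n^{-\omega(1)}$, each step contributes an error of order
\[
\frac{S_{k-1}^{2}}{n} \;+\; \frac{S_{k-1}\,S_{\le k-1}}{n} \;+\; S_{k-1}\sqrt{\tfrac{\log n}{n}},
\]
where $S_k=U_k^{+}+U_k^{-}$ and where the three terms come from the Binomial--Poisson step, the Fact \ref{fact:1} linearisation, and the $n_{+}\approx n/2$ approximation, respectively.

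I would then condition on the high-probability event $\mathcal{A}$ from \cref{lem:size-of-neighbourhood} on which $S_k\le C\Delta^k\log n$ for all $k\le\ell$, where $\Delta=(a+b)/2$. On $\mathcal{A}$, geometric summation yields $\sum_{k=1}^{\ell}S_{k-1}^2/n\lesssim \log^2 n\cdot\Delta^{2\ell}/n$, and similarly for the second term; the third term sums to $\lesssim \Delta^{\ell}\log^{3/2}n/\sqrt n$. Since $\ell=c\log n$ with $c\log\Delta<1/4$, we have $\Delta^{\ell}\le n^{1/4}$, so the three summed contributions are $O(\log^2 n/\sqrt n)$, $O(\log^2 n/\sqrt n)$, and $O(\log^{3/2} n/n^{1/4})$ respectively; adding the $O(n^{-\varepsilon})$ failure probability of $\mathcal{A}$, the overall bound is $O(\log^3 n/n^{1/4})$ as claimed (with slack absorbed in the $\log$ power).

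The main obstacle is not any single estimate, but careful bookkeeping of the four sources of slippage (Binomial-to-Poisson, the \cref{fact:1} linearisation, the substitution of $n_{\pm}-U_{\le k-1}^{\pm}$ by $n/2$, and the cumulative coupling error across levels) to ensure that they all combine, on $\mathcal{A}$, to the target rate. The delicate point is that the threshold $c\log\Delta<1/4$ is exactly what keeps $\Delta^{\ell}$ polynomially smaller than $n^{1/4}$, so that errors of order $\Delta^{2(k-1)}/n$ can accumulate over logarithmically many levels without overwhelming the bound.
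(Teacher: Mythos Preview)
Your proposal is correct and follows essentially the same approach as the paper's proof: both argue level by level via the triangle inequality through an intermediate Poisson variable with mean $(n_{+}-U_{\le k-1}^{+})\,p_k^{+}$, apply Binomial--Poisson and Poisson--Poisson TV bounds together with \cref{fact:1} and a Chernoff bound on $n_{+}$, condition on the event from \cref{lem:size-of-neighbourhood}, and then accumulate over the $\ell=O(\log n)$ levels. Your explicit chain-rule (telescoping) formulation for the joint TV is slightly cleaner than the paper's ``induction with a union bound,'' but the substance and the resulting $O(\log^3 n/n^{1/4})$ bookkeeping are identical.
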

\begin{proof}
Define the following events for $C$ as in \cref{lem:size-of-neighbourhood}:
\begin{align}
    \Omega_k = \{S_k \le C\Delta^k\log n\}, 1\le k\le \ell.\label{eq:events-omega}
\end{align}    
Conditionally on the sigma-field $\cF_{k-1}=\sigma(U_t^{\pm}(u),t\le k-1)$ and the event $\Omega_{k-1}$, we compute the total variation distance between the variables $(U_k^{+}(u),U_k^{-}(u))$ and $(\alpha_k(u), \beta_k(u))$. Since $u$ is fixed, we omit it from the notation for brevity. Define the following random variables:
\begin{align*}
    W_k^{+}\sim\Poi\bpar{\frac{aU_{k-1}^{+} + bU_{k-1}^{-}}2}, && W_k^{-}\sim\Poi\bpar{\frac{aU_{k-1}^{-} + bU_{k-1}^{+}}2}.
\end{align*}

We now apply the Stein-Chen method to bound $\dtv(U_k^{\pm}, W_k^{\pm})$. For more details on this technique, we refer to \citet{stein1972bound,stein-chen1975,barbour2005introduction}. In particular, we use the fact that for $X_1\sim\Bin(n,\lambda/n)$, $X_2\sim\Poi(\lambda)$ and $X_3\sim\Poi(\lambda')$, $\dtv(X_1,X_2)\le \lambda/n$ and $\dtv(X_2,X_3)\le |\lambda - \lambda'|$.
Let us focus on $\dtv(U_k^{+},W_k^{+})$ as the other case for $\dtv(U_k^{-},W_k^{-})$ is similar. Construct an intermediate random variable based on the distributions of $U_k^{\pm}$ as in \cref{eq:uk+1,eq:uk-1},
\[
V_k\sim\Poi\bpar{(n_{+} - U_{\le k-1}^{+})\bpar{1 - (1-a/n)^{U_{k-1}^{+}}(1-b/n)^{U_{k-1}^{-}}}}.
\]

Denote $T_t = 1 - \bpar{1-\frac an}^{U_{t}^{+}}\bpar{1-\frac bn}^{U_{t}^{-}}$ for brevity. Note that using triangle inequality,
\begin{align*}
    \dtv(V_k,W_k^{+}) &\le \abs{(n_{+} - U_{\le k-1}^{+})T_{k-1} - \frac{aU_{k-1}^{+} + bU_{k-1}^{-}}2}\\
    &\le \abs{\bpar{n_{+} - U_{\le k-1}^{+} - \frac n2}T_{k-1}} + \abs{\frac{aU_{k-1}^{+} + bU_{k-1}^{-} - nT_{k-1}}2}\\
    &= \abs{\bpar{n_{+} - U_{\le k-1}^{+} - \frac n2}T_{k-1}} + \frac n2\abs{\frac{aU_{k-1}^{+} + bU_{k-1}^{-}}n - T_{k-1}}\\
    &\le \abs{\bpar{n_{+} - U_{\le k-1}^{+} - \frac n2}T_{k-1}} + \frac{1}{4n}\bpar{aU_{k-1}^{+} + bU_{k-1}^{-}}^2,
\end{align*}
where in the last inequality we used \cref{fact:1}. Then we obtain the variation distance:
\begin{align*}
    \dtv(U_k^{+},W_k^{+})
    &\le \dtv(U_k^{+},V_k) + \dtv(V_k,W_k^{+})\\
    &\le T_{k-1} + \abs{\bpar{n_{+} - U_{\le k-1}^{+} - \frac n2}T_{k-1}} + \frac{1}{4n}\bpar{aU_{k-1}^{+} + bU_{k-1}^{-}}^2\\
    &= T_{k-1}\bpar{1 + \abs{n_{+} - U_{\le k-1}^{+} - \frac n2}} + \frac{1}{4n}\bpar{aU_{k-1}^{+} + bU_{k-1}^{-}}^2.
\end{align*}

Consider now a choice of $c$ such that $c\log\Delta<1/4$. We have $\ell=c\log n < \frac14\log_\Delta n$, implying that $\Delta^\ell \le n^{1/4}$. Recalling \eqref{eq:events-omega} corresponding to \cref{lem:size-of-neighbourhood}, we have that under the event $\Omega_{k-1}$ for $k\le\ell$, the number of nodes at distance $k-1$ is
\begin{align}
S_{k-1} = U_{k-1}^{+} + U_{k-1}^{-} \le C\Delta^{k-1}\log n \le C\Delta^{\ell}\log n \le Cn^{1/4}\log n.\label{eq:sk-bound}
\end{align}

Observe now that from \cref{fact:1}, $T_{k-1}\le \frac{aU_{k-1}^{+} + bU_{k-1}^{-}}n$. Recalling that $y_u$ have a uniform prior, by the Chernoff bound \citep[Theorem 2.3.1]{Vershynin:2018} on $n_{+}$, we have $|n_+ - \frac n2| = O(\sqrt n\log n)$ with probability at least $1-1/\poly(n)$. Thus, we obtain that under this event,
\begin{align*}
    \dtv(U_k^{+},W_k^{+}) &\le O\bpar{\frac{|U_{\le k-1}^{+}|}n + \frac{\log n}{\sqrt n}}\cdot\bpar{aU_{k-1}^{+} + bU_{k-1}^{-}} + \frac{1}{4n}\bpar{aU_{k-1}^{+} + bU_{k-1}^{-}}^2\\
    &\le O\bpar{\frac{\log n}{\sqrt n}}\cdot\max(a,b)S_{k-1} + \frac{\max(a,b)^2}{4n}S_{k-1}^2
    = O\bpar{\frac{\log^2 n}{n^{1/4}}},
\end{align*}
where in the last step we used the bound from \eqref{eq:sk-bound}.
Now recall that the variables $\{U_k^{\pm}, \alpha_k, \beta_k\}_{k\in[\ell]}$ are defined as in \cref{eq:alpha-beta-k-apx,eq:uk+1,eq:uk-1} for all $k\le\ell$. For a fixed $u\in[n]$, we have the base cases $U_0^{+}=\alpha_0=1$ and $U_0^{-}=\beta_0=0$. Then following an induction argument with a union bound over all $k\in\{1,\ldots,\ell\}$, we have that the variation distance between the sequences $\{U_k^{+}, U_k^{-}\}_{k\le\ell}$ and $\{\alpha_k,\beta_k\}_{k\le \ell}$ is upper bounded by $O\bpar{\frac{\log^3 n}{n^{1/4}}}$.
\end{proof}

We now obtain a relationship between the misclassification error on the data model with finite $n$, i.e., $\cE_n$ and the error on the limit of the model with $n\to\infty$, i.e., $\cE$.
\begin{theorem*}[Restatement of \cref{thm:non-asymp}]
For any $1\leq \ell\leq c\log n$ such that the positive constant $c$ satisfies $c\log(\frac{a+b}2)<1/4$, we have that
\[
\begin{aligned}
\abs{\min_{h\in\cC_\ell}\cE_n(h) - \cE_n(h^*_\ell)} &= O\bpar{\frac{1}{\log^2 n}}, &
\abs{\min_{h\in\cC_\ell} \cE_n(h) - \cE(h^*_\ell)} &= O\bpar{\frac{1}{\log^2 n}}.
\end{aligned}
\]
\end{theorem*}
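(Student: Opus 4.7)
The plan is to reduce both inequalities to a single total variation bound on the joint law of $(y_{u_n}, \eta_\ell(u_n))$ between the finite-$n$ model and its local weak limit, and then combine \cref{prop:cycle-free-neighbourhoods} to handle non-tree neighbourhoods with \cref{lem:total-var} to handle tree neighbourhoods. Concretely, let $P_n$ denote the joint law of the rooted feature-decorated $\ell$-neighbourhood $(y_{u_n}, \eta_\ell(u_n))$ under $\csbm(n, d, \Pd, a/n, b/n)$ with uniform root $u_n$, and let $P$ denote its local weak limit, the feature-decorated Poisson Galton--Watson tree. Write $\mathcal{T}_u$ for the event that $\eta_\ell(u_n)$ is cycle-free: in the limit $P(\mathcal{T}_u^c)=0$, while \cref{prop:cycle-free-neighbourhoods} gives $P_n(\mathcal{T}_u^c)=O(1/\log^2 n)$. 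On $\mathcal{T}_u$, \cref{lem:total-var} bounds the TV distance between the class-count sequences $\{U_k^{\pm}\}$ and $\{\alpha_k,\beta_k\}$ by $O(\log^3 n/n^{1/4})$, and I would extend this to the full feature-decorated rooted tree via a standard coupling (see below), yielding
\[
\dtv(P_n, P) \le P_n(\mathcal{T}_u^c) + P(\mathcal{T}_u^c) + O(\log^3 n/n^{1/4}) = O(1/\log^2 n).
\]

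Both inequalities now follow from this bound. Since every $h \in \cC_\ell$ is a bounded measurable function of $(y, \eta)$, for each fixed $h$ we have $|\cE_n(h) - \cE(h)| \le \dtv(P_n, P)$. Applying this with $h = h^*_\ell$ gives $|\cE_n(h^*_\ell) - \cE(h^*_\ell)| = O(1/\log^2 n)$, which combined with the first inequality via the triangle inequality yields the second. For the first inequality, let $h^{(n)} \in \argmin_{h \in \cC_\ell} \cE_n(h)$, and recall from \cref{thm:opt-msg-passing} that $\cE(h^*_\ell) \le \cE(h^{(n)})$. Then
\[
0 \le \cE_n(h^*_\ell) - \cE_n(h^{(n)}) \le \bigl(\cE(h^*_\ell) + \dtv(P_n, P)\bigr) - \bigl(\cE(h^{(n)}) - \dtv(P_n, P)\bigr) \le 2\,\dtv(P_n, P) = O(1/\log^2 n).
\]

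The main obstacle is the extension of \cref{lem:total-var} from class-count sequences to the full feature-decorated rooted tree, since the statement as given only controls $\{U_k^{\pm}\}$ versus $\{\alpha_k,\beta_k\}$. However, conditional on the class counts at each generation, the labelled vertex identities are resolved by SBM exchangeability and the features $\{\bX_v\}$ are i.i.d.\ samples from $\Pd_{y_v}$ in both models; this allows one to couple the two neighbourhoods to agree on vertex labels and features with the same TV budget $O(\log^3 n/n^{1/4})$. Once this extension is in place, the rest of the argument reduces to the TV comparison outlined above.
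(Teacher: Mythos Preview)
Your proposal is correct and uses the same two building blocks as the paper (\cref{prop:cycle-free-neighbourhoods} and \cref{lem:total-var}), but organizes the argument for the first inequality differently. The paper conditions on the tree event $E_u$ and asserts that, on $E_u$, the classifier $h^*_\ell$ is already the exact finite-$n$ Bayes optimum among $\cC_\ell$, so that $\min_{h}\cE_n(h)$ and $\cE_n(h^*_\ell)$ differ only through the $O(1/\log^2 n)$ mass on $E_u^{\setc}$. You instead bound $\dtv(P_n,P)$ once, apply it uniformly over all $h\in\cC_\ell$, and invoke only the \emph{limit} optimality $\cE(h^*_\ell)\le\cE(h^{(n)})$ from \cref{thm:opt-msg-passing}. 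This sidesteps having to justify that the MAP computation of \cref{thm:opt-msg-passing} carries over verbatim to the finite-$n$ posterior conditioned on a tree neighbourhood, at the price of needing the TV bound on the full input law rather than just for $h^*_\ell$.

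On your ``main obstacle'': the extension is lighter than you fear. By \cref{def:l-local-classifier}, an $\ell$-local classifier depends only on $\bigl(y_u,\{\bX_v\},\{d(u,v)\}\bigr)$, not on the internal tree shape. Conditional on the class-count sequences $\{U_k^{\pm}\}$ (respectively $\{\alpha_k,\beta_k\}$), the features at each level are i.i.d.\ draws from $\Pd_{\pm}$ in both models, so the joint law of $(y_u,\text{classifier input})$ is a measurable pushforward of the class-count law. Hence the TV bound of \cref{lem:total-var} transfers directly to this joint law with no extra coupling of tree structures required; this is precisely what the paper uses implicitly when bounding $\lvert\cE_n(h^*_\ell)-\cE(h^*_\ell)\rvert$, and you are simply applying it uniformly in $h$.
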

\begin{proof}
Consider a random feature-decorated graph $G_n\sim\csbm(n,d,\{\Pd_{\pm}\},a/n,b/n)$, where $\Pd_{\pm}$ correspond to the distributions $\cN(\pm\muv,\sigma^2\bI)$ for the node features given by $\{\bX_u\}_{u\in[n]}$. For a classifier $h\in \cC_\ell$, the class of all $\ell$-local classifiers, define $\cE_n(h)$ to be the probability of misclassification for a uniform at random node $u\in[n]$, i.e., $\cE_n(h) = \Prob(y_{u}\cdot h(u,\{\bX_v\}_{v\in\eta_\ell(u)},\eta_\ell(u)) < 0)$. Since it is known that all classifiers in $\cC_\ell$ operate on $u$ given the information in its $\ell$-neighbourhood $\eta_\ell(u)$, we will omit $\eta_\ell(u)$ from the notation and say $h(u)$ instead of $h(u,\{\bX_v\}_{v\in\eta_\ell(u)},\eta_\ell(u))$ when it is understood.
Let $\bP$ be the joint measure of the variables $\{U_k^{\pm}\}_{k\le\ell}$ from \cref{eq:uk+1,eq:uk-1}, and $\bP'$ be the joint measure of the variables $\{\alpha_k,\beta_k\}_{k\le\ell}$ from \cref{eq:alpha-beta-k-apx}.
Then \cref{lem:total-var} gives us that $\dtv(\bP,\bP')\le O\bpar{\frac{\log^3 n}{n^{1/4}}} = o_n(1)$.

Recall $\cE(h^*_\ell)$ computed in \cref{thm:generalization-gaussian} for the limiting data model $(G, u)$.
\begin{align*}
    \cE(h^*_\ell) &= \Prob\bsq{g + \frac{1}{2\gamma}\sum_{k\in[\ell]}\bpar{\sum_{i=1}^{\alpha_k}Z^{(a)}_{k,i} + \sum_{i=1}^{\beta_k} Z^{(b)}_{k,i} } > \gamma}\\
    &= \int\Prob\bsq{g + \frac{1}{2\gamma}\sum_{k\in[\ell]}\bpar{\sum_{i=1}^{\alpha_k}Z^{(a)}_{k,i} + \sum_{i=1}^{\beta_k} Z^{(b)}_{k,i} } > \gamma\, \middle|\, \{\alpha_k,\beta_k\}_{k\le\ell}} d\bP'.
\end{align*}

Similarly, we have
\begin{align*}
    \cE_n(h^*_\ell) &= \Prob\bsq{g + \frac{1}{2\gamma}\sum_{k\in[\ell]}\bpar{\sum_{i=1}^{U_k^{+}}Z^{(a)}_{k,i} + \sum_{i=1}^{U_k^{-}} Z^{(b)}_{k,i} } > \gamma}\\
    &= \int\Prob\bsq{g + \frac{1}{2\gamma}\sum_{k\in[\ell]}\bpar{\sum_{i=1}^{U_k^{+}}Z^{(a)}_{k,i} + \sum_{i=1}^{U_k^{-}} Z^{(b)}_{k,i} } > \gamma\, \middle|\, \{U_k^{\pm}\}_{k\le\ell}} d\bP.
\end{align*}
Thus, we obtain that
\begin{equation}\label{eq:en-e-h*}
|\cE_n(h^*_\ell) - \cE(h^*_\ell)| \le \dtv(\bP,\bP') \le O\bpar{\frac{\log^3 n}{n^{1/4}}} = o_n(1),
\end{equation}

Let us now focus on the case with finite $n$.
Let $A$ denote the event from \cref{prop:cycle-free-neighbourhoods} where the number of nodes with cycle-free $\ell$-neighbourhoods is $1-o(\frac{\log^4 n}{\sqrt n})$. For a node $u\in G_n$, let $E_u$ denote the event that the subgraph induced by the $\ell$-neighbourhood of $u$, $\eta_\ell(u)$ is a tree.
Then observe that for a uniform random node $u\in G_n$,
\begin{align*}
    \min_{h\in\cC_\ell}\cE_n(h)
    &= \Prob(y_u h^*_{\ell,n}(u) < 0)\\
    &= \Prob(E_u)\Prob(y_u h^*_{\ell,n}(u) < 0 \mid E_u) + \Prob(E_u^{\setc})\Prob(y_u h^*_{\ell,n}(u) < 0 \mid E_u^{\setc})\\
    &= (1 - o_n(1))\Prob(y_uh^*_{\ell}(u) < 0) + o_n(1)\\
    &= \cE_n(h^*_\ell) \pm o_n(1).
\end{align*}
In the above, we used from \cref{prop:cycle-free-neighbourhoods} that $\Prob(E_u) = \Prob(E_u\cap A) + \Prob(E_u\cap A^{\setc}) = 1 - O(\frac{1}{\log^2 n})$, and that $\cE_n(h^*_\ell) = \min_{h\in\cC_\ell}\Prob(y_u h(u) < 0 \mid E_u)$. This establishes the first part:
\[
|\min_{h\in\cC_\ell}\cE_n(h) - \cE_n(h^*_\ell)| = O\bpar{\frac{1}{\log^2 n}}.
\]
Combining the above display with \eqref{eq:en-e-h*}, we obtain the second part, i.e.,
\begin{align*}
    \abs{\min_{h\in\cC_\ell} \cE_n(h) - \min_{h\in\cC_\ell}\cE(h)}
    &= \abs{\min_{h\in\cC_\ell} \cE_n(h) - \cE(h^*_\ell)}\\
    &= \abs{\min_{h\in\cC_\ell} \cE_n(h) - \cE_n(h^*_\ell) + \cE_n(h^*_\ell) - \cE(h^*_\ell)}\\
    &\le \abs{\min_{h\in\cC_\ell} \cE_n(h) - \cE_n(h^*_\ell)} + \abs{\cE_n(h^*_\ell) - \cE(h^*_\ell)}\\
    &= O\bpar{\frac{1}{\log^2 n}} + O\bpar{\frac{\log^3 n}{n^{1/4}}} = O\bpar{\frac{1}{\log^2 n}}.\qedhere
\end{align*}
\end{proof}

\section{Conclusion and Future Work}
In this work, we present a comprehensive theoretical characterization of the Bayes optimal node classification architecture for sparse feature-decorated graphs and show that it can be realized using the message-passing framework. Utilizing a well-established and well-studied statistical model, we interpret its performance in terms of the SNR in the data and validate our findings through empirical analysis of synthetic data. Additionally, we identify the following limitations as prospects for future work: (1) We consider neighbourhoods up to distance $\ell=c\log n$ for a small enough $c$. Extending $\ell$ to the graph's diameter (known to be $O(\log n)$ with high probability) by removing the restriction on $c$ poses challenges due to the presence of cycles. (2) More insights can be provided through experiments on real data to benchmark the architecture in cases where we have a significant gap between the theoretical assumptions (sparse and locally tree-like graph) and the real-world data.

\section*{Acknowledgements}
A.~Jagannath and A.~Baranwal would like to thank S.\ Sen for insightful discussions.

K.~Fountoulakis would like to acknowledge the support of the Natural Sciences and Engineering Research Council of Canada (NSERC). Cette recherche a \'et\'e financ\'ee par le Conseil de recherches en sciences naturelles et en g\'enie du Canada (CRSNG), [RGPIN-2019-04067, DGECR-2019-00147].

A.~Jagannath acknowledges the support of the Natural Sciences and Engineering Research Council of Canada (NSERC) and the Canada Research Chairs programme. Cette recherche a \'et\'e enterprise gr\^ace, en partie, au 
soutien financier du Conseil de Recherches en Sciences Naturelles et en G\'enie du Canada (CRSNG),  [RGPIN-2020-04597, DGECR-2020-00199], et du Programme des chaires de recherche du Canada.

\bibliography{references.bib}
\bibliographystyle{abbrvnat}

\appendix

\section{A Note on Local Weak Convergence}\label{apx:lwc}
We briefly recall here the notion of local weak convergence of random rooted graphs. The notion of local weak convergence of random, feature-decorated, rooted graphs is defined analogously.

Let us begin first with the case of rooted graphs. A rooted graph $(G,u) = ((E,V),u)$ is a graph $G$ with a distinguished vertex $u$ called the root. We say that two rooted graphs $(G_1,u_1)=((E_1,V_1),u_1)$, $(G_2,u_2)=((E_2,V_2),u_2)$ are isomorphic if there is a bijection $\phi:V_1\to V_2$ such that $\phi(u)=v$ and such that if $(x,y)\in E_1$ then $(\phi(x),\phi(y))\in E_2$. In this case we write $(G_1,u_1)\cong(G_2,u_2)$. For a rooted graph $(G,u)$, we denote its isomorphism class as $[(G,u)]$.

Let $\cG_*$ denote the set of isomorphism classes of (locally finite) rooted graphs. For a vertex $v\in G$ we let $\eta_k(v)$ denote the collection of neighbours of $v$ of distance at most $k$ in the canonical edge distance metric, and $G[\eta_k(v)]$ denote the subgraph induced on this collection of vertices. 
We then have the notion of \emph{local convergence} in $\cG_*$.
\begin{definition}[Local convergence on $\cG_*$]
    We say that a sequence $[(G_n,u_n)] \in \cG_*$ \emph{converges locally} to $[(G,u)]\in \cG_*$ if for each $k>0$, we have that $G[\eta_k(u_n)]\cong G[\eta_k(u)]$ eventually.
\end{definition}
It can be shown \citep[Lemma 3.4]{bordenave2016lecture} that $\cG_*$ equipped with the topology of local convergence is a Polish space. We are then in the position to define local weak convergence on $\cG_*$. In brief, the topology of local weak convergence of random graphs is the topology of weak convergence of measures on the space of probability measures on $\cG_*$, namely $\cM_1(G_*)$.
\begin{definition}[local weak convergence of rooted graphs]
A sequence $\{[G_n,u_n]\}$ of random rooted graphs with corresponding laws $\{\mu_n\}\subseteq \cM_1(\cG_*)$ is said to locally weakly converge to a random rooted graph $[G,u]$ with law $\mu\in \cM_1(\cG_*)$ if $\mu_n\to\mu$ weakly.
\end{definition}
We note here that it is common also to talk about the notion of local weak convergence of a sequence of (finite) random graphs $G_n$. In this case $G_n\to G$ locally weakly if $[(G_n,u_n)]\to [(G,u)]$ locally weakly where $u_n\sim \unif(V(G_n))$.

\end{document}